\documentclass{article}

\PassOptionsToPackage{numbers, compress}{natbib}

\usepackage[preprint]{neurips_2026}

\usepackage[utf8]{inputenc} 
\usepackage[T1]{fontenc}    
\usepackage{hyperref}       
\usepackage{url}            
\usepackage{booktabs}       
\usepackage{amsfonts}       
\usepackage{nicefrac}       
\usepackage{microtype}      
\usepackage{xcolor}         
\usepackage{amsmath}
\usepackage{graphicx}
\usepackage{subcaption}
\usepackage{algorithm}
\usepackage{algpseudocode}
\usepackage{amsthm}
\usepackage{multirow}
\usepackage{dsfont}
\usepackage{wrapfig}
\usepackage{todonotes}
\usepackage{xspace}
\usepackage[table]{xcolor}

\newtheorem{proposition}{Proposition}

\newcommand{\methodbase}{DICE}

\newcommand{\ourmethod}{\texttt{\methodbase}\xspace}
\newcommand{\ourmethodnoreg}{\texttt{\methodbase} (w/o reg)\xspace}

\definecolor{lightindigo}{rgb}{0.533, 0.349, 0.671}
\definecolor{royalblue}{rgb}{0.2549,0.4118,0.8824}

\hypersetup{
    colorlinks=true,  %
    citecolor=royalblue, 
    linkcolor=royalblue,
    filecolor=royalblue,
    urlcolor=royalblue,
}

\title{Uncertainty Estimation in Pathology Foundation Models via Deep Mutual Learning}

\author{%
    Gbègninougbo Aurel Davy Tchokponhoue \thanks{Equal contribution.} \\
    UM6P, Ben Guerir, Morocco\\
  \texttt{aurel.tchokponhoue@um6p.ma} \\
  \And
  Sevda Öğüt \footnotemark[1] \\
    EPFL, Lausanne, Switzerland\\
  \texttt{sevda.ogut@epfl.ch} \\
  \And
  Ali Idri \\
    UM5, Rabat, Morocco\\
  \texttt{ali.idri@ensias.um5.ac.ma} \\
  \And
  Dorina Thanou \\
    EPFL, Lausanne, Switzerland\\
  \texttt{dorina.thanou@epfl.ch} \\
  \And
  Pascal Frossard\\
    EPFL, Lausanne, Switzerland\\
  \texttt{pascal.frossard@epfl.ch} 
}

\begin{document}

\maketitle

\begin{abstract}
Pathology foundation models (PFMs) offer generalizable representations for whole-slide image (WSI) analysis, yet their clinical adoption remains limited. 
Specifically, their predictions lack reliable confidence estimates, and no single PFM is universally best across tasks, which severely undermines trust in medical settings.
To overcome this, we propose \ourmethod, a plug-and-play framework that ensembles $K$ frozen PFMs and models their disagreement as a proxy for uncertainty estimation.
To ensure this proxy yields meaningful estimates, we align the ensemble members via deep mutual learning, and theoretically show that this objective upper-bounds the model uncertainty. 
Additionally, we demonstrate that the ensemble's consensus localizes abnormalities at the patch level without any explicit supervision.
We evaluate \ourmethod on three challenging WSI benchmarks.
Notably, our framework provides reliable uncertainty estimates that accurately flag failure-prone cases under in- and out-of-distribution settings, while matching or outperforming SOTA baselines in classification, calibration, and localization.
Overall, \ourmethod takes a crucial step toward translating PFMs into uncertainty-aware decision-support systems.
\end{abstract}

\section{Introduction}

Histopathology is a central component of clinical decision-making, as it provides information on cellular morphology and tissue architecture that is essential for diagnosis, prognosis, and treatment planning.
In particular, hematoxylin and eosin (H\&E)-stained tissue samples are routinely collected and digitized, resulting in vast amounts of gigapixel whole-slide images (WSIs), which are typically divided into patches for analysis.
Together with advances in large-scale computation, the abundance of WSI data has enabled the development of pathology foundation models (PFMs)~\citep{chen2024uni, xu2024whole, zimmermann2024virchow2}.
These models are pretrained using self-supervised learning (SSL) and can be used to extract general-purpose representations.
The use of these representations for various downstream tasks, including survival analysis~\cite{yang2025foundation}, biomarker discovery~\cite{campanella2025real}, and tumor subtyping~\cite{ougut2026graphist}, has demonstrated their strong generalization capabilities.

However, the translation of such foundation models into routine clinical practice remains limited~\cite{aggarwal2025artificial, jaume2026foundation}.
A critical gap is that PFM pipelines generally output predictions without any explicit measure of confidence.
This is a major shortcoming for clinical deployment, where models should not only be accurate but also indicate when their predictions may be uncertain~\cite{begoli2019need}.
Furthermore, selecting an appropriate PFM for deployment is itself challenging, as model performance varies substantially across downstream tasks, datasets, and clinical contexts~\cite{campanella2025clinical, neidlinger2025benchmarking}.
As a result, there is rarely a universally superior PFM, highlighting the need for methods that quantify model reliability and support informed model selection.

To address these limitations, we propose \ourmethod (\underline{D}isagreement-\underline{I}nformed \underline{C}oordination of \underline{E}xperts), a plug-and-play framework that integrates a diverse ensemble of PFMs, spanning different architectures, pretraining objectives, and data scales, for informed WSI analysis.
The core intuition behind \ourmethod is that disagreement among heterogeneous PFMs is an informative signal for uncertainty estimation.
Accordingly, \ourmethod is trained by treating each PFM and its corresponding classification head as an \textit{expert}, and aligning them through deep mutual learning (DML)~\cite{zhang2018deep} as well as Gramian~\cite{cicchetti2025gramian} objectives.
These objectives encourage the residual disagreement across experts to reflect meaningful uncertainty, a property that we theoretically validate by proving that the ensemble's epistemic uncertainty is upper-bounded by the aggregate DML loss.
At inference, \ourmethod leverages this disagreement as a proxy for slide-level uncertainty and flags error-prone slides, enabling safe deferral to clinicians in ambiguous cases.
In addition, \ourmethod exploits expert consensus to better localize patch-level lesions, enabling faster identification of diagnostically relevant tissue regions.
To the best of our knowledge, \ourmethod is the first framework to leverage PFM ensembles for task-agnostic uncertainty estimation in computational pathology.

Empirically, we evaluate our framework on three weakly supervised WSI benchmarks: PANDA~\cite{bulten2022artificial} for prostate cancer grading and CAMELYON16/17~\cite{ehteshami2017diagnostic, litjens20181399} for breast lymph-node metastasis detection.
Across these datasets, experts' residual disagreement identifies uncertain samples more reliably than standard baselines, in both in- and out-of-distribution settings.
Moreover, \ourmethod's fusion-of-experts mechanism matches or exceeds the strongest single PFM baselines across a range of downstream tasks and metrics, while providing unsupervised lesion localization through expert consensus.
Our contributions are as follows:

\begin{itemize}
    \item We propose \ourmethod, the first plug-and-play framework that fuses PFM ensembles for uncertainty-aware decision support in histopathology.
    \item We provide a theoretical foundation for our framework by proving that minimizing the training-time DML alignment objective contracts an upper bound on the model uncertainty.
    \item Our extensive experiments demonstrate that our method not only accurately highlights unreliable samples, but also improves classification, calibration, and localization performance compared to SOTA models.
\end{itemize}

Overall, \ourmethod offers a promising step toward making PFMs more trustworthy in clinical settings by establishing a theoretically grounded approach to uncertainty-aware fusion.

\section{Related work}

\paragraph{Pathology foundation models.}
 A growing body of work has produced pathology-specific foundation models, including patch-level ones such as CTransPath~\cite{wang2022transformer}, Virchow~\cite{vorontsov2023virchow}, and UNI~\cite{chen2024uni}, as well as slide-level ones such as GigaPath~\cite{xu2024whole} and PRISM~\cite{shaikovski2024prism}.
 While these models enable transferable representations for histopathology, they are typically used as feature extractors and do not provide explicit confidence estimates, which is a key downside for clinical use.
 Furthermore, recent benchmarks~\cite{campanella2025clinical, neidlinger2025benchmarking} show that no single PFM is uniformly best across tasks and datasets.
 These limitations motivate methods that can exploit the complementary strengths of multiple PFMs.
 
\paragraph{Fusion methods in histopathology.}
A growing body of concurrent work studies the fusion of PFMs.
ELF~\cite{luo2025ensemble} integrates several PFMs into a unified slide representation through ensemble learning and reports strong downstream performance, while FuseCPath~\cite{yang2025fusion} proposes a multi-scale fusion framework with patch-level re-embedding and slide-level collaborative distillation, and FM2~\cite{yu2025fm2} disentangles consensus and divergence components across foundation models for downstream prediction.
None of these methods, however, provides explicit confidence estimates.
PICTURE~\cite{zhao2025uncertainty}, on the other hand, does quantify uncertainty by combining multiple PFMs with prototype-guided Bayesian inference, deep ensembling, and normalizing-flow-based out-of-distribution detection.
Yet, it is built for a specific task, namely, to differentiate glioblastoma from its mimics.
As a result, none of the above methods offers a task-agnostic framework for uncertainty-aware fusion of PFMs.

\paragraph{Uncertainty estimation and deep mutual learning.}
To fill this gap, we draw inspiration from the broader uncertainty estimation literature and from deep mutual learning, briefly reviewed below.
Deep ensembles~\cite{lakshminarayanan2017simple} provide a scalable alternative to Bayesian deep learning, aggregating independently trained models and using their disagreement as a proxy for uncertainty estimation.
MC dropout~\cite{gal2016dropout} achieves a similar effect within a single network by treating the variability across multiple stochastic forward passes, induced by dropout at test time, as a proxy for uncertainty.
Orthogonally, deep mutual learning~\cite{zhang2018deep} trains peer models through reciprocal distillation as a knowledge-transfer technique to improve individual accuracy.
To the best of our knowledge, its effect on ensemble disagreement has not yet been studied.
Our proposed framework is the first to leverage DML in computational pathology through the lens of uncertainty quantification.

\begin{figure}[t]
    \centering
    \includegraphics[width=\linewidth]{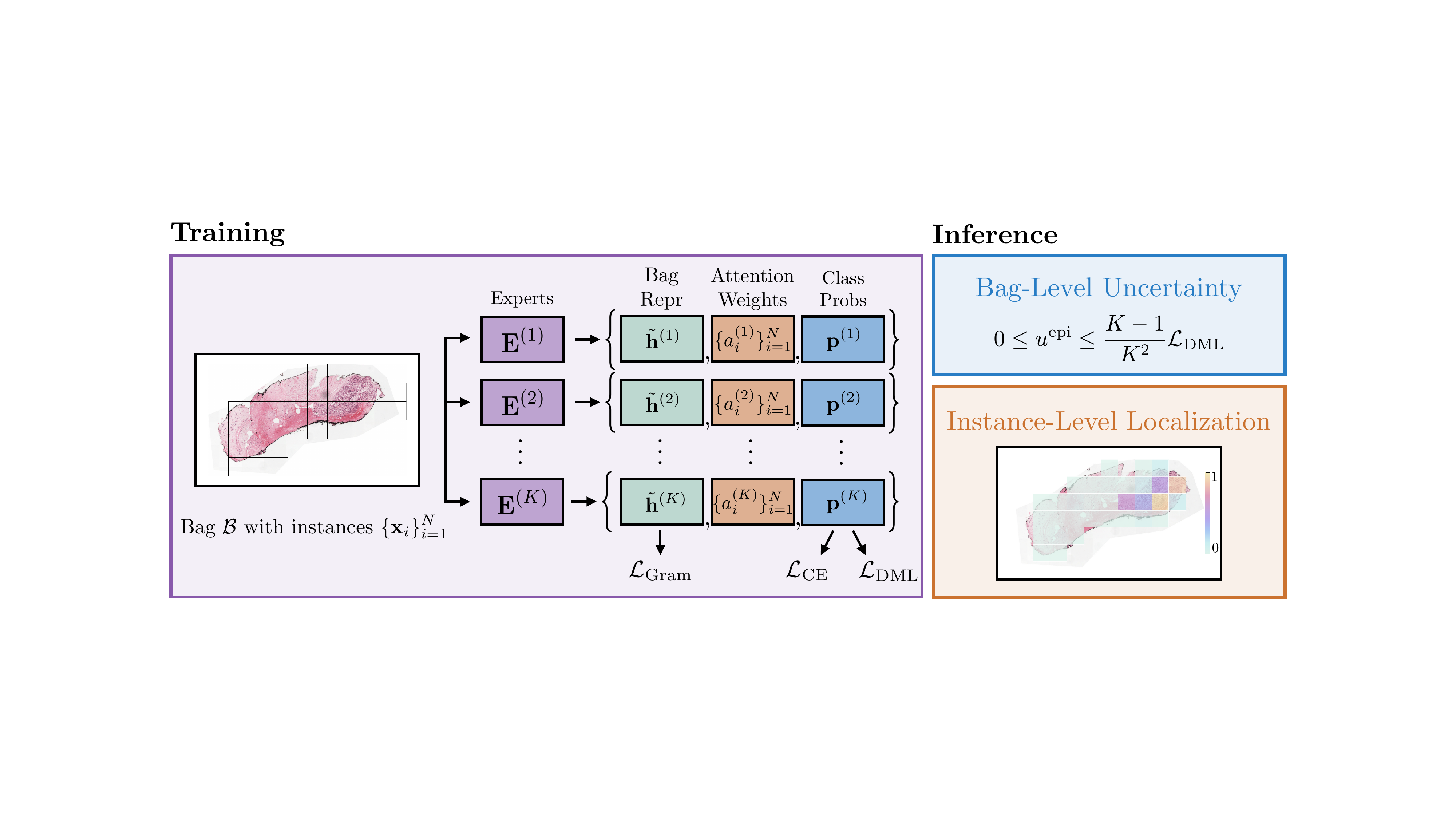}
    \caption{\textbf{Overview of our framework.} A whole-slide image (bag) consisting of multiple patches (instances) is processed by $K$ experts, each producing a bag representation, attention weights, and class probabilities. Training combines classification, deep mutual learning, and Gramian objectives. At inference, posterior disagreement provides a signal for slide-level uncertainty (theoretically bounded by the DML loss), while the agreement of attention maps enables patch-level localization.}
    \label{fig:framework}
\end{figure}

\section{Disagreement-informed coordination of experts}

Motivated by the absence of confidence measures and task-dependent performance of PFMs, we introduce \ourmethod (\underline{D}isagreement-\underline{I}nformed \underline{C}oordination of \underline{E}xperts), a plug-and-play framework that leverages multiple PFMs for reliable uncertainty estimation and lesion localization in WSI analysis.

\subsection{Problem formulation}
\label{sec:problem_setup}

We consider a weakly supervised setting in which each whole-slide image is represented as a bag $\mathcal{B} = \{\mathbf{x}_1,\dots,\mathbf{x}_N\}$ of $N$ patches with a slide-level label $y \in \{1,\dots, C\}$, where $N$ varies across slides.
We let $\{\mathrm{PFM}^{(1)},\dots,\mathrm{PFM}^{(K)}\}$ denote $K$ frozen pathology foundation models and emphasize that our formulation is agnostic to the specific selection of models.
Accordingly, each patch $\mathbf{x}_i$ in a slide is encoded as $\mathbf{h}_i^{(k)} = \mathrm{PFM}^{(k)}(\mathbf{x}_i) \in \mathbb{R}^{d_k}$ and the set of embeddings $\{\mathbf{h}_i^{(k)}\}_{i=1}^{N}$ for model $k$ is processed by an independent MIL aggregator $f^{(k)}$.
While our framework applies to any MIL architecture, we instantiate $f^{(k)}$ as ABMIL~\cite{pmlr-v80-ilse18a} throughout, given its widespread use in computational pathology.
We denote a specific PFM-backbone and MIL-head combination as an \textit{expert} $\mathbf{E}^{(k)}$.\footnote{This terminology should not be conflated with mixture-of-experts models, which are outside the scope of this work.}

Crucially, our main goal is to learn these expert parameters whose fused prediction achieves competitive performance at the slide level, which is essential for solving the downstream task, while leveraging the residual disagreement across experts as informative estimates of predictive uncertainty.
Thus, each expert computes
\begin{equation*}
  \mathbf{p}^{(k)}(\mathcal{B}) = \mathrm{softmax} \!\left( \mathrm{CLF}^{(k)}\left( \tilde{\mathbf{h}}^{(k)} \right)\!\right),
  \qquad
  \tilde{\mathbf{h}}^{(k)} = \sum_{i=1}^{N} a_{i}^{(k)}\, \mathbf{h}_{i}^{(k)},
\end{equation*}
with attention weights
\begin{equation*}
    a_{i}^{(k)} =
    \frac{\exp\!\Bigl(\mathbf{w}^{(k)\top}
    \bigl[\tanh\!\bigl(\mathbf{V}^{(k)}\mathbf{h}_{i}^{(k)}\bigr)
    \odot
    \sigma\!\bigl(\mathbf{U}^{(k)}\mathbf{h}_{i}^{(k)}\bigr)\bigr]\Bigr)}
    {\sum_{j=1}^{N}\exp\!\Bigl(\mathbf{w}^{(k)\top}
    \bigl[\tanh\!\bigl(\mathbf{V}^{(k)}\mathbf{h}_{j}^{(k)}\bigr)
    \odot
    \sigma\!\bigl(\mathbf{U}^{(k)}\mathbf{h}_{j}^{(k)}\bigr)\bigr]\Bigr)},
\end{equation*}
with $\mathbf{w}^{(k)}$, $\mathbf{V}^{(k)}$, and $\mathbf{U}^{(k)}$ as learnable parameters.
This yields the expert quantities $\{ \tilde{\mathbf{h}}^{(k)}, \{a_i^{(k)}\}_{i=1}^{N}, \mathbf{p}^{(k)}\}_{k=1}^K$, which consist of bag-level embeddings, attention weights for each patch, and a predictive distribution.
The $K$ experts have independent parameters and optimizers, with no weight sharing. 

\subsection{Coupled expert training for prediction and representation alignment}
\label{sec:training}

\ourmethod couples experts during training, enforcing consistency in both prediction and representation spaces.
Without such training-time interaction, disagreement among independently trained experts is often dominated by optimization noise, leading to unreliable estimates.
In contrast, coupling removes these spurious sources of disagreement, leaving a structured signal for uncertainty estimation.

\paragraph{Prediction-space alignment via deep mutual learning.}
To couple the experts at the level of slide-level decisions, we adopt a deep mutual learning objective~\cite{zhang2018deep}.
Each expert is trained jointly to predict the bag label and to align its predictive distribution with those of its $K-1$ peers.
For a bag $\mathcal{B}$ with label $y$, expert $k$ minimizes
\[
\mathcal{L}_{\mathrm{pred}}^{(k)} = \mathcal{L}_{\mathrm{sup}}^{(k)} + \,\mathcal{L}_{\mathrm{DML}}^{(k)}, \qquad \mathcal{L}_{\mathrm{sup}}^{(k)} = \mathrm{CE}(\mathbf{p}^{(k)}, y),
\]
where $\mathrm{CE}(\cdot)$ is the standard cross-entropy loss and $\mathbf{p}^{(k)}$ is the posterior predictive distribution defined in Section~\ref{sec:problem_setup}.
The mutual distillation term is
\begin{equation}
\mathcal{L}_{\mathrm{DML}}^{(k)} = \frac{1}{K-1} \sum_{\ell\neq k} \mathrm{KL}\!\left( \mathbf{p}^{(\ell)} \,\Big\|\, \mathbf{p}^{(k)} \right),
\label{eq:dml_loss}
\end{equation}
where $\mathrm{KL}(\cdot)$ stands for Kullback–Leibler divergence.
This objective encourages expert $k$ to align its posterior with those of its peers while remaining anchored to the ground-truth label through the supervised loss.
Hence, the remaining disagreement after training reflects uncertainty.

\paragraph{Representation-space alignment via Gramian measure.}
Prediction-level alignment leaves the slide-level embeddings themselves unconstrained.
As a result, experts can produce nearly identical predictions while their bag representations remain geometrically misaligned.
Consequently, disagreement-based uncertainty estimates reflect representation mismatch rather than genuine data uncertainty.
To disentangle the two, we add a representation-level alignment term based on the Gramian measure~\cite{cicchetti2025gramian}, which encourages expert slide embeddings to span a low-volume subspace.
Concretely, we let $\psi^{(k)}$ be a projection head that maps $\tilde{\mathbf{h}}^{(k)}$ to a common dimensionality, before normalizing the resulting embedding,
\[
\mathbf{r}^{(k)} = \psi^{(k)}\!\left(\tilde{\mathbf{h}}^{(k)}\right) \in \mathbb{R}^{d_r},
\qquad
\tilde{\mathbf{r}}^{(k)} = \mathbf{r}^{(k)} / \|\mathbf{r}^{(k)}\|_2 .
\]
Stacking the normalized slide-level embeddings column-wise yields $\tilde{\mathbf{R}}(\mathcal{B}) = [\tilde{\mathbf{r}}^{(1)},\dots,\tilde{\mathbf{r}}^{(K)}] \in \mathbb{R}^{d_r \times K}$, and the Gramian loss is defined as the volume of the parallelotope they span, i.e., 
\[
\mathcal{L}_{\mathrm{Gram}} =
\sqrt{\det\!\left(\tilde{\mathbf{R}}^{\top}\tilde{\mathbf{R}}\right)}.
\]
A small value of $\mathcal{L}_{\mathrm{Gram}}$ indicates that the $K$ slide embeddings lie close to a shared low-dimensional subspace, whereas a large value reflects semantic divergence across experts.

\begin{algorithm}[t]
\caption{Training of the proposed framework on a bag $\mathcal{B} = \{\mathbf{x}_i\}_{i=1}^{N}$ with label $y$.}
\label{alg:framework}
\begin{algorithmic}
\Require Frozen PFMs $\{\mathrm{PFM}^{(k)}\}$, MIL heads $\{f^{(k)}\}$ with parameters $\{\theta_k\}$, projection heads $\{\psi^{(k)}\}$, per-expert optimizers $\{\mathrm{Opt}_k\}$, and $\lambda_{\mathrm{Gram}}(t)$.
\For{$k=1,\dots,K$} \Comment{forward pass for each expert}
   \State $\mathbf{h}_i^{(k)} \gets \mathrm{PFM}^{(k)}(\mathbf{x}_i)$ for $i=1,\dots,N$
   \State $\bigl(\tilde{\mathbf{h}}^{(k)},\,\{a_i^{(k)}\}_{i=1}^{N},\,\mathbf{p}^{(k)}\bigr) \gets f^{(k)}\!\bigl(\{\mathbf{h}_i^{(k)}\}_{i=1}^{N}\bigr)$
   \State $\mathbf{r}^{(k)} \gets \psi^{(k)}(\tilde{\mathbf{h}}^{(k)})$,\quad $\tilde{\mathbf{r}}^{(k)} \gets \mathbf{r}^{(k)}/\|\mathbf{r}^{(k)}\|_2$
\EndFor
\State $\tilde{\mathbf{R}} \gets [\tilde{\mathbf{r}}^{(1)},\dots,\tilde{\mathbf{r}}^{(K)}]$ \Comment{shared regularization across experts}
\State $\mathcal{L}_{\mathrm{Gram}} \gets \sqrt{\det(\tilde{\mathbf{R}}^{\top}\tilde{\mathbf{R}})}$
\For{$k=1,\dots,K$} \Comment{per-expert loss and update}
   \State $\mathcal{L}_{\mathrm{sup}}^{(k)} \gets \mathrm{CE}(\mathbf{p}^{(k)}, y)$
   \State $\mathcal{L}_{\mathrm{DML}}^{(k)} \gets \dfrac{1}{K-1}\sum\limits_{\ell\neq k}\mathrm{KL}\bigl(\mathbf{p}^{(\ell)}\,\|\,\mathbf{p}^{(k)}\bigr)$
   \State $\mathcal{L}^{(k)} \gets \mathcal{L}_{\mathrm{sup}}^{(k)} + \,\mathcal{L}_{\mathrm{DML}}^{(k)} + \lambda_{\mathrm{Gram}}(t)\,\mathcal{L}_{\mathrm{Gram}}$
   \State $\theta_k \gets \mathrm{Opt}_k\!\bigl(\theta_k,\,\nabla_{\theta_k}\mathcal{L}^{(k)}\bigr)$
\EndFor
\end{algorithmic}
\end{algorithm}

\paragraph{Training objective.}
Finally, the total loss on bag $\mathcal{B}$ combines the three terms introduced above.
\begin{equation}
    \mathcal{L} =
\sum\limits_{k=1}^{K} \mathcal{L}_{\mathrm{sup}}^{(k)}
+ \sum\limits_{k=1}^{K} \mathcal{L}_{\mathrm{DML}}^{(k)}
+ \lambda_{\mathrm{Gram}}(t)\, \mathcal{L}_{\mathrm{Gram}} = \mathcal{L}_{\mathrm{sup}}
+ \mathcal{L}_{\mathrm{DML}}
+ \lambda_{\mathrm{Gram}}(t)\, \mathcal{L}_{\mathrm{Gram}}
\label{eq:total_loss}
\end{equation}
where $\lambda_{\mathrm{Gram}}(t)$ follows a warm-up schedule over the first few epochs.
We summarize the training procedure in Algorithm~\ref{alg:framework}  and illustrate the full framework in Figure~\ref{fig:framework}.

\subsection{Uncertainty representation}
\label{sec:uq-repre}

At inference time, we obtain the ensemble prediction by averaging the predictive distributions of the $K$ experts in \ourmethod as $\bar{\mathbf{p}}(\mathcal{B}) \;=\; \frac{1}{K}\sum_{k=1}^{K} \mathbf{p}^{(k)}(\mathcal{B})$.
Following the standard information-theoretic decomposition~\cite{depeweg2018decomposition, lakshminarayanan2017simple}, we then disentangle this prediction into its predictive (total), aleatoric (data), and epistemic (model) components:
\[
\underbrace{u^{\mathrm{pred}} = \mathcal{H}\!\left(\bar{\mathbf{p}}(\mathcal{B})\right)\vphantom{\sum_{k=1}^{K}}}_{\text{Total Uncertainty}}
\;,\quad
\underbrace{u^{\mathrm{alea}} = \frac{1}{K}\sum_{k=1}^{K}\mathcal{H}\!\left(\mathbf{p}^{(k)}(\mathcal{B})\right)}_{\text{Data Uncertainty}}
\;,\quad
\underbrace{u^{\mathrm{epi}} = u^{\mathrm{pred}} - u^{\mathrm{alea}}\vphantom{\sum_{k=1}^{K}}}_{\text{Model Uncertainty}}
\]
where $\mathcal{H}(\cdot)$ is the Shannon entropy.
Intuitively, the predictive term $u^{\mathrm{pred}}$ is high whenever $\bar{\mathbf{p}}(\mathcal{B})$ is spread across classes, meaning the ensemble is uncertain about which class the sample belongs to.
The aleatoric term $u^{\mathrm{alea}}$, is the irreducible uncertainty arising from inherent noise in the data generation process, whereas the epistemic term $u^{\mathrm{epi}}$, is the reducible component of the total uncertainty.
It captures the residual disagreement across experts and vanishes whenever all $K$ experts predict the same distribution.
Although this decomposition is often motivated by a Bayesian view of the ensemble, its information-theoretic form is valid without that assumption, which we show below.
In practice, however, the total predictive uncertainty $u^{\mathrm{pred}}$ is the standard measure reported in the uncertainty estimation literature~\cite{cui2023bayesmil, lolos2026sgpmil} as well as the quantity that is actually observed.
We therefore adopt it as our primary proxy throughout the remainder of the paper.
Next, we provide a theoretical foundation for using residual disagreement as the uncertainty proxy.

\begin{proposition}
\label{prop:jsd}
For any $K$ distributions $\mathbf{p}^{(1)},\dots,\mathbf{p}^{(K)}$ over $C$ classes
\[
u^{\mathrm{epi}}
= \frac{1}{K}\sum_{k=1}^{K}\mathrm{KL}\!\left(\mathbf{p}^{(k)}\,\|\,\bar{\mathbf{p}}\right)
= \mathrm{JSD}\!\left(\mathbf{p}^{(1)},\dots,\mathbf{p}^{(K)}\right) \;\ge\; 0,
\]
with equality to zero if and only if $\mathbf{p}^{(1)}=\dots=\mathbf{p}^{(K)}$.
\end{proposition}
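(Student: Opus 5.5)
The argument is a direct expansion of the entropy difference into KL divergences, followed by Gibbs' inequality for nonnegativity and equality.

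\textbf{Step 1: the identity.} I write $u^{\mathrm{epi}} = \mathcal{H}(\bar{\mathbf{p}}) - \frac{1}{K}\sum_k \mathcal{H}(\mathbf{p}^{(k)})$. Since $\bar{p}_c = \frac{1}{K}\sum_k p^{(k)}_c$, I rewrite the total entropy as
\[
\mathcal{H}(\bar{\mathbf{p}}) = -\sum_c \bar{p}_c \log \bar{p}_c = -\frac{1}{K}\sum_k \sum_c p^{(k)}_c \log \bar{p}_c .
\]
This is the cross-entropy of each $\mathbf{p}^{(k)}$ against $\bar{\mathbf{p}}$, averaged over $k$. Subtracting $\frac{1}{K}\sum_k \mathcal{H}(\mathbf{p}^{(k)}) = -\frac{1}{K}\sum_k\sum_c p^{(k)}_c\log p^{(k)}_c$ term by term gives
\[
u^{\mathrm{epi}} = \frac{1}{K}\sum_k\sum_c p^{(k)}_c \log\frac{p^{(k)}_c}{\bar{p}_c} = \frac{1}{K}\sum_k \mathrm{KL}\!\left(\mathbf{p}^{(k)}\,\|\,\bar{\mathbf{p}}\right).
\]

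\textbf{Step 2: the JSD equality.} The generalized Jensen--Shannon divergence with uniform weights is usually defined either as $\mathcal{H}(\bar{\mathbf{p}}) - \frac{1}{K}\sum_k\mathcal{H}(\mathbf{p}^{(k)})$ or as the mean KL divergence to the mixture. Step 1 shows these two definitions coincide, so the second equality holds by definition under either convention.

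\textbf{Step 3: nonnegativity and the equality case.} Each $\mathrm{KL}(\mathbf{p}^{(k)}\|\bar{\mathbf{p}})\ge 0$ by Gibbs' inequality (Jensen applied to $-\log$). Hence the average is nonnegative. It vanishes if and only if every term vanishes, that is, if and only if $\mathbf{p}^{(k)}=\bar{\mathbf{p}}$ for all $k$, which means all the distributions coincide. Conversely, if all $\mathbf{p}^{(k)}$ are equal, then $\bar{\mathbf{p}}$ equals their common value and every KL term is zero. An alternative route is strict concavity of $\mathcal{H}$, which gives $\mathcal{H}(\bar{\mathbf{p}})\ge\frac1K\sum_k\mathcal{H}(\mathbf{p}^{(k)})$ with equality exactly when all arguments are equal.

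\textbf{Main obstacle: zero-probability classes.} The only delicate point is handling classes with zero mass. I adopt the convention $0\log 0=0$. If $p^{(k)}_c>0$, then $\bar{p}_c\ge p^{(k)}_c/K>0$. So whenever $\bar{p}_c=0$, every $p^{(k)}_c$ is also zero, and each $\mathbf{p}^{(k)}$ is absolutely continuous with respect to $\bar{\mathbf{p}}$. All KL terms are therefore finite, and the term-by-term manipulation in Step 1 is valid after restricting the sum to the support of $\bar{\mathbf{p}}$.
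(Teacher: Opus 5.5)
Your proposal is correct and follows essentially the same route as the paper's proof: you rewrite $\mathcal{H}(\bar{\mathbf{p}})$ as an averaged cross-entropy, subtract to obtain the mean KL to the mixture, identify this with the multi-way JSD by definition, and use Gibbs' inequality for nonnegativity and the equality case. Your remark on zero-probability classes (each $\mathbf{p}^{(k)}$ is absolutely continuous with respect to $\bar{\mathbf{p}}$) is a small rigor point that the paper leaves implicit.
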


Proposition~\ref{prop:jsd} shows that $u^{\mathrm{epi}}$ admits a closed-form expression as the multi-way Jensen–Shannon divergence among expert posteriors~\cite{lin1991divergence} and is lower-bounded by zero.
We now complement this result by showing that $u^{\mathrm{epi}}$ also admits an upper bound.

\begin{proposition}
\label{prop:dml_bound}
For a bag $\mathcal{B}$, the epistemic uncertainty satisfies
\[
u^{\mathrm{epi}} = \frac{1}{K}\sum_{k=1}^{K} \mathrm{KL}\!\left( \mathbf{p}^{(k)} \,\Big\|\, \bar{\mathbf{p}} \right) \;\le\; \frac{K-1}{K^2}\mathcal{L}_{\mathrm{DML}}.
\]
\end{proposition}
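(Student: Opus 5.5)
The plan is to start from the closed form in Proposition~\ref{prop:jsd}, which already gives $u^{\mathrm{epi}} = \frac{1}{K}\sum_{k}\mathrm{KL}(\mathbf{p}^{(k)}\,\|\,\bar{\mathbf{p}})$. That takes care of the equality in the statement, so only the inequality remains. I will bound each summand $\mathrm{KL}(\mathbf{p}^{(k)}\,\|\,\bar{\mathbf{p}})$ by pairwise divergences between experts, using that the KL divergence is convex in its second argument.

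\textbf{Step 1 (convexity).} Write $\bar{\mathbf{p}} = \frac{1}{K}\sum_{\ell=1}^{K}\mathbf{p}^{(\ell)}$ as a uniform mixture. For fixed $\mathbf{p}^{(k)}$, the map $\mathbf{q}\mapsto \mathrm{KL}(\mathbf{p}^{(k)}\,\|\,\mathbf{q}) = \sum_c p^{(k)}_c\log p^{(k)}_c - \sum_c p^{(k)}_c \log q_c$ is convex, because $-\log$ is convex. Jensen's inequality then gives
\[
\mathrm{KL}\!\left(\mathbf{p}^{(k)}\,\|\,\bar{\mathbf{p}}\right) \le \frac{1}{K}\sum_{\ell=1}^{K}\mathrm{KL}\!\left(\mathbf{p}^{(k)}\,\|\,\mathbf{p}^{(\ell)}\right) = \frac{1}{K}\sum_{\ell\neq k}\mathrm{KL}\!\left(\mathbf{p}^{(k)}\,\|\,\mathbf{p}^{(\ell)}\right).
\]
The $\ell=k$ term drops out because it is zero.

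\textbf{Step 2 (average over $k$).} Averaging over $k$ gives
\[
u^{\mathrm{epi}} \le \frac{1}{K^2}\sum_{k}\sum_{\ell\neq k}\mathrm{KL}\!\left(\mathbf{p}^{(k)}\,\|\,\mathbf{p}^{(\ell)}\right),
\]
which is $\frac{1}{K^2}$ times a sum over all ordered pairs $(k,\ell)$ with $k\neq\ell$.

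\textbf{Step 3 (match to the loss).} From \eqref{eq:dml_loss} and \eqref{eq:total_loss},
\[
\mathcal{L}_{\mathrm{DML}} = \sum_k \mathcal{L}^{(k)}_{\mathrm{DML}} = \frac{1}{K-1}\sum_k\sum_{\ell\neq k}\mathrm{KL}\!\left(\mathbf{p}^{(\ell)}\,\|\,\mathbf{p}^{(k)}\right).
\]
The KL arguments here appear in the opposite order from Step 2. However, the sum runs over all ordered pairs $(k,\ell)$ with $k\neq\ell$, so swapping the dummy indices $k\leftrightarrow\ell$ shows that the two double sums are equal. Hence
\[
\sum_k\sum_{\ell\neq k}\mathrm{KL}\!\left(\mathbf{p}^{(k)}\,\|\,\mathbf{p}^{(\ell)}\right) = (K-1)\,\mathcal{L}_{\mathrm{DML}},
\]
and substituting into Step 2 yields $u^{\mathrm{epi}}\le\frac{K-1}{K^2}\mathcal{L}_{\mathrm{DML}}$.

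The only delicate point is this index bookkeeping in Step 3: the direction of the KL inside the DML loss versus the direction produced by Jensen, and the $1/(K-1)$ normalization. The symmetry of the full ordered-pair sum resolves the direction issue, so no symmetrized KL is needed.

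A minor technical caveat concerns supports. If some $p^{(\ell)}_c=0$ while $p^{(k)}_c>0$, both sides may be $+\infty$, and the inequality holds trivially. Softmax outputs are strictly positive in any case, so all quantities are finite in the setting of the paper.
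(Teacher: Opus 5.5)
Your proof is correct and follows the paper's argument step for step. Like the paper, you start from the closed form of Proposition~\ref{prop:jsd}, apply Jensen's inequality via convexity of $\mathrm{KL}$ in its second argument, average over $k$ with the diagonal terms vanishing, and relabel the ordered-pair sum to identify it with $(K-1)\mathcal{L}_{\mathrm{DML}}$; your remark that softmax outputs keep every quantity finite is a harmless extra that the paper leaves implicit.
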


Proposition~\ref{prop:dml_bound} establishes a formal connection between deep mutual learning and uncertainty estimation and shows that minimizing the aggregate DML objective in Equation~\ref{eq:dml_loss} contracts an upper bound on the epistemic uncertainty of the ensemble.
Concretely, as the DML loss decreases during training, inter-expert disagreement shrinks, which in turn reduces epistemic uncertainty.
Thus, DML not only facilitates information exchange between experts to improve predictive performance, but also promotes more consistent posterior beliefs.
We provide proofs in Appendix~\ref{app:proofs}.
\section{Experiments}

\subsection{Setup}
\label{sec:setup}

\paragraph{Datasets.}
We evaluate on three publicly available digital pathology datasets that span two cancer types and both binary and multi-class settings: PANDA~\cite{bulten2022artificial} for six-class ISUP grading of prostate biopsies, and CAMELYON16~\cite{ehteshami2017diagnostic} and CAMELYON17~\cite{litjens20181399} for binary metastasis detection in breast lymph-node sections.
All provide pixel-level lesion annotations, enabling our patch-level localization experiments.
Slides are tiled into non-overlapping $512\times512$-pixel patches at $20\times$ magnification using TRIDENT~\cite{zhang2025standardizing}.
When no official test split is available, we hold out 20\% of the slides as a test set and perform five-fold stratified cross-validation on the remainder to tune hyperparameters via Optuna~\cite{optuna_2019}.
Full dataset statistics, splits, and implementation details are in Appendix~\ref{app:datasets} and~\ref{app:implementation}.

\paragraph{Training of \ourmethod.}
Following ensemble learning conventions~\cite{lakshminarayanan2017simple}, we instantiate $K=5$ experts using Virchow2~\cite{zimmermann2024virchow2}, UNI2-h~\cite{chen2024uni}, H-optimus-1~\cite{hoptimus1}, CONCHv1.5~\cite{lu2024visual}, and Hibou-L~\cite{nechaev2024hibou}, spanning diverse pretraining strategies (DINOv2, CoCa), model sizes ($307$M to $1.1$B parameters), and data scales ($200$M to $1.7$B patches).
Each PFM is kept frozen and coupled with an independent MIL head.
Unless otherwise stated, we use ABMIL~\cite{pmlr-v80-ilse18a} as the default aggregator due to its widespread use.
The $K$ heads are optimized jointly under the objective in Equation~\ref{eq:total_loss} with one AdamW~\cite{loshchilov2018decoupled} optimizer per expert.
The Gramian weight $\lambda_{\mathrm{Gram}}(t)$ follows a warm-up schedule, allowing experts to first specialize under their supervised and DML objectives, which we found empirically to prevent trivial solutions and representation collapse.
We evaluate two variants: \ourmethodnoreg, trained under the supervised loss and the prediction-space DML objective only, and \ourmethod, which additionally applies the Gramian objective.
Full model and training details are reported in Appendices~\ref{app:foundation_models} and~\ref{app:implementation}.

\paragraph{Baselines.}
We compare \ourmethod against the following baselines, all trained under the supervised loss only.
\textit{Single PFM} pairs each frozen PFM with its own ABMIL head trained independently.
\textit{Monte Carlo Dropout}~\cite{gal2016dropout} extends the trained single PFM by performing stochastic forward passes during inference with dropout.
\textit{Early Fusion} concatenates the patch-level embeddings of the $K$ models and feeds the resulting embedding into a single ABMIL head.
We further include two variants of late fusion that average posteriors across $K$ independently trained experts.
\textit{Late Fusion (homog)} trains $K$ heads on the same PFM with different random seeds, isolating the contribution of ensembling alone, while \textit{Late Fusion (heterog)} trains one head for each of the $K$ distinct PFMs, combining ensembling with backbone diversity.
Implementation details are given in Appendix~\ref{app:baselines}.
The concurrent fusion methods of FuseCPath~\cite{yang2025fusion}, ELF~\cite{luo2025ensemble}, and FM2~\cite{yu2025fm2} either provide no public code and pretrained weights or release only partial implementations, and PICTURE~\cite{zhao2025uncertainty} is, by design, specific to glioblastoma diagnosis and does not transfer to the prostate and breast tasks studied here without dedicated clinical curation.
See Appendix~\ref{app:concurrent} for a detailed discussion.

\paragraph{Metrics.}
We assess our framework along three axes: slide-level classification via F1 score, calibration via negative log-likelihood (NLL) of the slide-level posteriors, and patch-level lesion localization via the AUC of mean attention scores against binary tumor labels.
Additional metrics along each axis (AUC, Brier score) and their formal definitions are reported in Appendix~\ref{app:metrics}.

\subsection{Results}

\subsubsection{Classification and calibration performance}

Table~\ref{tab:abmil-main-performance} reports our main results.
Extended tables with additional metrics are in Appendix~\ref{app:results}. 

\begin{table}[t]
\caption{\textbf{Slide- and patch-level performance on PANDA, CAMELYON16, and CAMELYON17.} We report slide-level classification with F1 (\%), calibration with NLL ($\times 100$), and patch-level tumor localization with AUC (\%). Values are mean $\pm$ std across five folds on the held-out test set. MCD and LF (homog) are built on the single PFM with the highest test F1 per dataset, while LF (heterog) and \ourmethod{} variants use all five PFMs. Bold and underline mark the best and second-best per column.}
\label{tab:abmil-main-performance}
\vspace{0.5em}
\centering
\resizebox{\textwidth}{!}{%
\begin{tabular}{lccccccccc}
\toprule
 & \multicolumn{3}{c}{\textbf{PANDA}} & \multicolumn{3}{c}{\textbf{CAMELYON16}} & \multicolumn{3}{c}{\textbf{CAMELYON17}} \\
\cmidrule(lr){2-4}\cmidrule(lr){5-7}\cmidrule(lr){8-10}
 & \multicolumn{1}{c}{\textbf{Classification}} & \multicolumn{1}{c}{\textbf{Calibration}} & \multicolumn{1}{c}{\textbf{Localization}} & \multicolumn{1}{c}{\textbf{Classification}} & \multicolumn{1}{c}{\textbf{Calibration}} & \multicolumn{1}{c}{\textbf{Localization}} & \multicolumn{1}{c}{\textbf{Classification}} & \multicolumn{1}{c}{\textbf{Calibration}} & \multicolumn{1}{c}{\textbf{Localization}} \\
 & F1 ($\uparrow$) & NLL ($\downarrow$) & AUC ($\uparrow$) & F1 ($\uparrow$) & NLL ($\downarrow$) & AUC ($\uparrow$) & F1 ($\uparrow$) & NLL ($\downarrow$) & AUC ($\uparrow$) \\
\midrule
\multicolumn{10}{l}{\textbf{Single PFMs}} \\
Virchow2 & 72.8 \scriptsize{$\pm$ 1.51} & 58.0 \scriptsize{$\pm$ 1.30} & \underline{82.0 \scriptsize{$\pm$ 0.67}} & \underline{96.3 \scriptsize{$\pm$ 0.59}} & 14.8 \scriptsize{$\pm$ 2.44} & 85.5 \scriptsize{$\pm$ 5.26} & 79.8 \scriptsize{$\pm$ 2.24} & 58.6 \scriptsize{$\pm$ 11.55} & 96.0 \scriptsize{$\pm$ 2.73} \\
UNI2-h & 69.7 \scriptsize{$\pm$ 0.82} & 60.0 \scriptsize{$\pm$ 0.55} & 73.9 \scriptsize{$\pm$ 1.11} & 95.7 \scriptsize{$\pm$ 1.96} & 20.6 \scriptsize{$\pm$ 8.26} & 86.9 \scriptsize{$\pm$ 1.91} & \textbf{80.6 \scriptsize{$\pm$ 3.47}} & 60.2 \scriptsize{$\pm$ 10.49} & 86.4 \scriptsize{$\pm$ 7.08} \\
H-optimus-1 & 71.2 \scriptsize{$\pm$ 0.87} & 59.1 \scriptsize{$\pm$ 0.92} & 74.2 \scriptsize{$\pm$ 2.06} & 94.4 \scriptsize{$\pm$ 1.15} & 18.8 \scriptsize{$\pm$ 3.38} & 90.5 \scriptsize{$\pm$ 2.11} & 78.2 \scriptsize{$\pm$ 2.27} & 65.4 \scriptsize{$\pm$ 12.87} & 95.1 \scriptsize{$\pm$ 1.18} \\
CONCHv1.5 & 68.7 \scriptsize{$\pm$ 1.32} & 66.3 \scriptsize{$\pm$ 0.80} & 78.9 \scriptsize{$\pm$ 1.15} & 94.5 \scriptsize{$\pm$ 0.99} & 16.9 \scriptsize{$\pm$ 3.51} & 84.3 \scriptsize{$\pm$ 3.01} & 78.0 \scriptsize{$\pm$ 3.48} & 59.0 \scriptsize{$\pm$ 11.16} & \textbf{96.9 \scriptsize{$\pm$ 1.11}} \\
Hibou-L & 69.9 \scriptsize{$\pm$ 0.88} & 62.6 \scriptsize{$\pm$ 0.72} & 73.2 \scriptsize{$\pm$ 1.06} & 92.4 \scriptsize{$\pm$ 0.90} & 26.7 \scriptsize{$\pm$ 4.80} & 81.4 \scriptsize{$\pm$ 12.06} & 79.2 \scriptsize{$\pm$ 3.86} & 65.9 \scriptsize{$\pm$ 7.82} & 95.5 \scriptsize{$\pm$ 1.51} \\
MC Dropout & 72.3 \scriptsize{$\pm$ 1.69} & 56.3 \scriptsize{$\pm$ 1.70} & 78.5 \scriptsize{$\pm$ 1.30} & 95.2 \scriptsize{$\pm$ 0.60} & 15.9 \scriptsize{$\pm$ 5.22} & 81.5 \scriptsize{$\pm$ 9.09} & 79.0 \scriptsize{$\pm$ 5.61} & 68.6 \scriptsize{$\pm$ 29.89} & 90.1 \scriptsize{$\pm$ 5.03} \\
\midrule
\multicolumn{10}{l}{\textbf{Fusion Methods}} \\
Early Fusion & 73.6 \scriptsize{$\pm$ 1.02} & 54.5 \scriptsize{$\pm$ 0.71} & 80.3 \scriptsize{$\pm$ 1.22} & 95.7 \scriptsize{$\pm$ 1.75} & 19.6 \scriptsize{$\pm$ 4.64} & \textbf{93.3 \scriptsize{$\pm$ 0.87}} & \underline{80.3 \scriptsize{$\pm$ 3.54}} & 57.5 \scriptsize{$\pm$ 10.67} & 95.8 \scriptsize{$\pm$ 1.75} \\
Late Fusion (homog) & \underline{73.8 \scriptsize{$\pm$ 0.11}} & 53.8 \scriptsize{$\pm$ 0.95} & \textbf{82.8 \scriptsize{$\pm$ 0.87}} & 95.8 \scriptsize{$\pm$ 0.00} & 12.1 \scriptsize{$\pm$ 1.72} & 87.0 \scriptsize{$\pm$ 2.18} & 79.5 \scriptsize{$\pm$ 2.38} & 60.3 \scriptsize{$\pm$ 11.22} & 91.7 \scriptsize{$\pm$ 3.59} \\
Late Fusion (heterog) & 73.4 \scriptsize{$\pm$ 0.81} & \underline{53.3 \scriptsize{$\pm$ 1.20}} & 77.8 \scriptsize{$\pm$ 0.72} & 96.2 \scriptsize{$\pm$ 0.57} & \underline{11.7 \scriptsize{$\pm$ 2.56}} & 92.5 \scriptsize{$\pm$ 1.09} & 78.9 \scriptsize{$\pm$ 2.99} & \underline{54.2 \scriptsize{$\pm$ 8.65}} & \underline{96.9 \scriptsize{$\pm$ 1.05}} \\
\midrule
\multicolumn{10}{l}{\textbf{Ours}} \\
\rowcolor{gray!20} \ourmethodnoreg & 73.6 \scriptsize{$\pm$ 1.11} & 54.6 \scriptsize{$\pm$ 2.97} & 80.1 \scriptsize{$\pm$ 1.25} & 96.2 \scriptsize{$\pm$ 0.57} & 12.2 \scriptsize{$\pm$ 1.06} & \underline{93.0 \scriptsize{$\pm$ 1.51}} & 79.3 \scriptsize{$\pm$ 1.46} & \textbf{50.5 \scriptsize{$\pm$ 2.98}} & 93.2 \scriptsize{$\pm$ 2.45} \\
\rowcolor{gray!20} \ourmethod & \textbf{76.6 \scriptsize{$\pm$ 1.02}} & \textbf{48.9 \scriptsize{$\pm$ 1.68}} & 81.1 \scriptsize{$\pm$ 0.62} & \textbf{96.4 \scriptsize{$\pm$ 0.58}} & \textbf{11.5 \scriptsize{$\pm$ 0.91}} & 89.4 \scriptsize{$\pm$ 1.45} & 77.2 \scriptsize{$\pm$ 1.36} & 54.6 \scriptsize{$\pm$ 3.04} & 96.2 \scriptsize{$\pm$ 0.60} \\
\bottomrule
\end{tabular}}
\vspace{-1em}
\end{table}

\paragraph{Single PFM rankings vary across datasets, tasks, and metrics.}
The single PFM results reinforce the motivation for our fusion framework.
Virchow2 and UNI2-h are the strongest classifiers, yet neither is consistently best on calibration or localization, where CONCHv1.5 and H-optimus-1 each take the top single PFM slot on at least one cohort.
Thus, single PFM selection is task-dependent.

\paragraph{Fusion generally improves slide-level performance.}
On PANDA, \ourmethod delivers the largest gains, lifting F1 by $3.8$ percentage points over the best single PFM and by $2.8$ over the best fusion method, while reducing NLL by similar margins.
On CAMELYON16, where single PFMs are already near saturation, \ourmethod still tops both classification and calibration.
On CAMELYON17, prediction-space alignment alone is preferable, and adding the Gramian term slightly decreases F1.
Across all three datasets, fusion methods generally outperform single PFMs on slide-level metrics, with classification and calibration gains often appearing jointly.

\subsubsection{Slide-level uncertainty estimation}

We evaluate whether the predictive entropy of the ensemble posterior can identify likely errors in practice.
To probe its reliability, we conduct three complementary analyses.
First, we implement a threshold-free selective-prediction experiment that defers slides in decreasing uncertainty and measures retained error rates.
Second, we conduct a deployment-oriented experiment by tuning the deferral threshold on validation folds and applying it unchanged to the test set.
Both analyses are reported on PANDA and CAMELYON17, since CAMELYON16 is near-saturated (test error below $3\%$ for most methods), leaving too few misclassified slides for a statistically meaningful separation analysis.
Finally, we conduct a cohort-shift experiment that tests whether predictive uncertainty separates correct from incorrect predictions when models trained on one CAMELYON cohort are evaluated on the other.
The resulting distribution shift induces enough additional errors to make this analysis informative in both directions.
All three analyses are restricted to methods producing principled uncertainty estimates, with MC dropout and homogeneous late fusion built on the single PFM with the highest test F1 per dataset (Virchow2 for PANDA and CAMELYON16, UNI2-h for CAMELYON17).

\begin{figure}[htbp]
    \centering
    \begin{subfigure}[b]{0.34\linewidth}
        \raisebox{7pt}{\includegraphics[width=\linewidth]{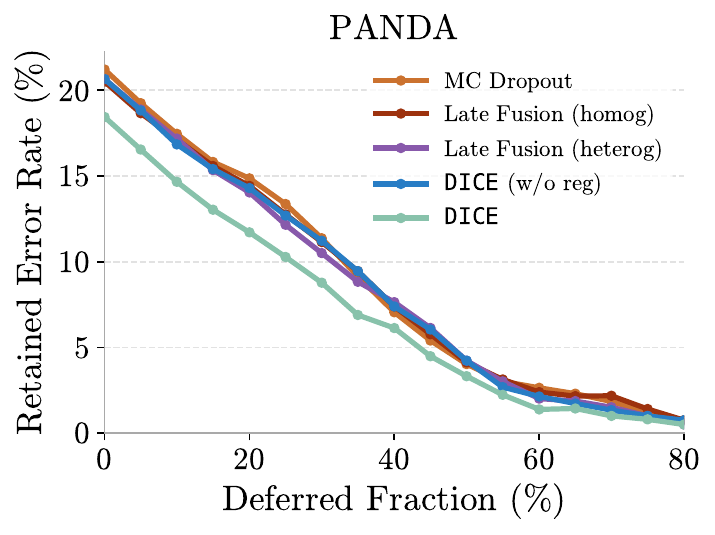}}
    \end{subfigure}
    \hfill
    \begin{subfigure}[b]{0.64\linewidth}
       \includegraphics[width=\linewidth]{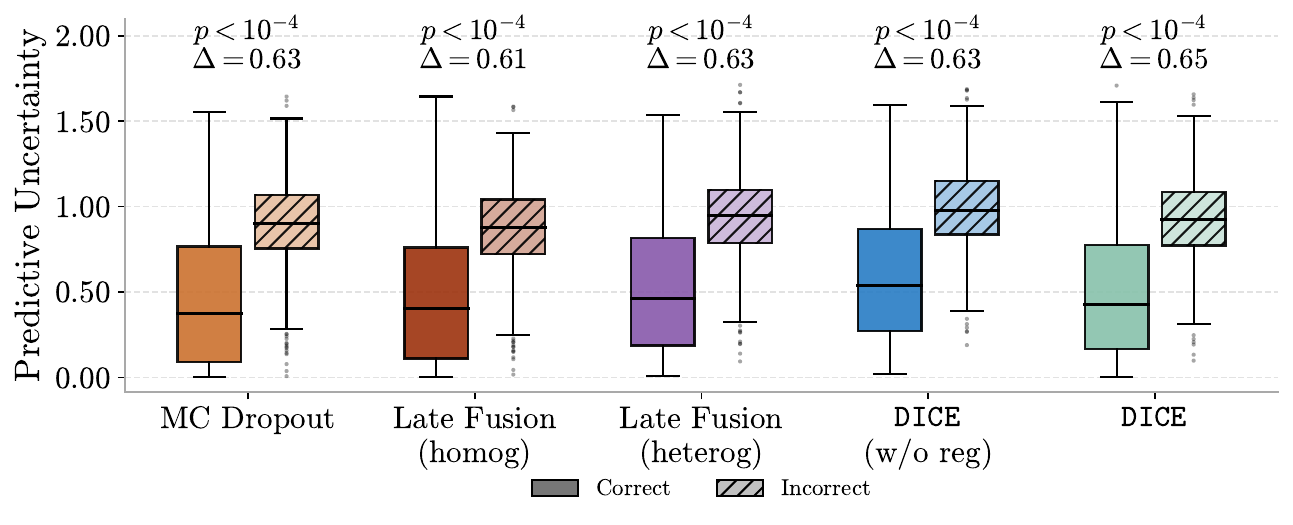}
    \end{subfigure}
    \caption{\textbf{Slide-level predictive uncertainty on PANDA.} \textit{Left:} We defer test slides in decreasing order of predictive uncertainty and report the error rate on the retained slides. Lower curves indicate that low-uncertainty slides contain a smaller fraction of the errors. \textit{Right:} Predictive uncertainty distributions for correct and incorrect predictions, shown for MC dropout, late fusion variants, and \ourmethod variants. Annotations report the Mann-Whitney $U$ test $p$-value and Cliff's $\Delta$ effect size. A better separation (larger $|\Delta|$) indicates that the method assigns higher uncertainty to misclassified slides, with \ourmethod achieving the best performance.}
    \label{fig:slide_uncertainty_panda}
\end{figure}

\paragraph{Predictive uncertainty identifies failure-prone slides.}
Figure~\ref{fig:slide_uncertainty_panda} shows the PANDA results for the threshold-free selective prediction analysis, while the analogue for CAMELYON17 is given in Appendix~\ref{app:results}.
We observe that deferring high-entropy slides monotonically lowers the retained error rate.
Moreover, \ourmethod dominates the MC dropout and the fusion baselines across the full deferral range, starting from the lowest error at zero deferral.
The box plots reveal that this gain comes from improved separation.
\ourmethod shifts the median entropy of correctly classified slides downward while keeping the entropy of misclassified slides high, yielding a Cliff's $\Delta = 0.65$ ($p < 10^{-4}$), which is the largest among all methods.
Together, these results indicate that residual disagreement under our alignment objectives behaves as a usable error-detection signal.

\paragraph{Validation-tuned rejection transfers to held-out test data.}
Figure~\ref{fig:threshold_rejection_panda} shows the change in F1 before and after rejecting slides with predictive entropy exceeding the selected threshold.
On PANDA, \ourmethod attains the highest post-rejection F1 on test, and its test-time gain exceeds its validation gain.
On CAMELYON17, whose plot is given in Figure~\ref{fig:threshold_rejection_all} of Appendix~\ref{app:results}, the validation-tuned thresholds transfer to test only for \ourmethod and \ourmethodnoreg, suggesting that aligned expert disagreement provides a more generalizable uncertainty signal.

\paragraph{Predictive entropy remains informative under cohort shift.}
Figure~\ref{fig:16to17} shows the correct/incorrect entropy distributions on CAMELYON17 from models trained on CAMELYON16, and the reverse direction is reported in Appendix~\ref{app:results}.
While heterogeneous late fusion remains competitive, \ourmethodnoreg achieves the strongest separation with $\Delta = 0.66$.
More broadly, the heterogeneous fusion variants clearly produce larger separations than their homogeneous counterparts.
This shows that under a cohort shift, principled single-model uncertainty may not be sufficient, while the disagreement between fused experts continues to detect likely errors.

\begin{figure}[htbp]
    \centering
    \begin{subfigure}[b]{0.35\linewidth}
        \raisebox{12pt}{\includegraphics[width=\linewidth]{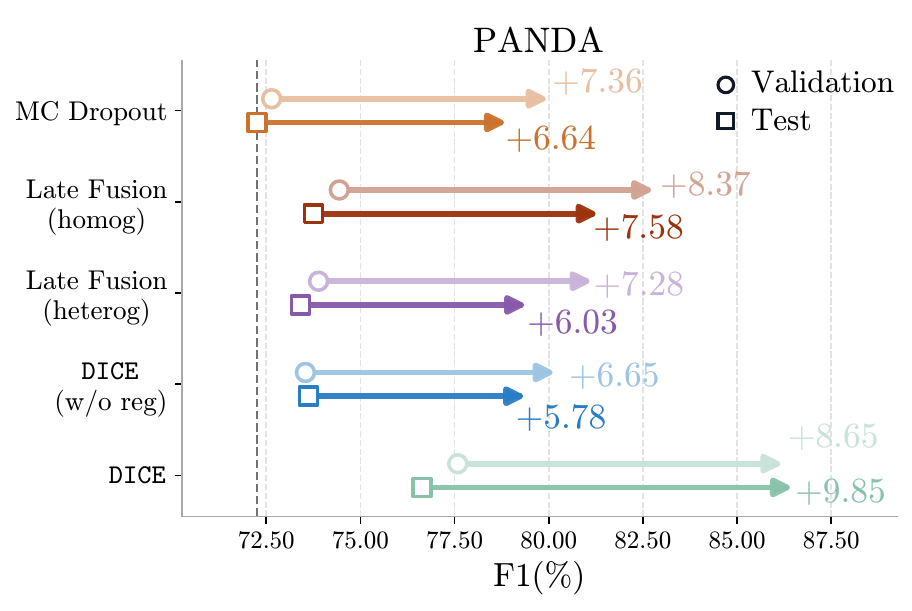}}
        \caption{Classification with rejection.}
        \label{fig:threshold_rejection_panda}
    \end{subfigure}
    \hfill
    \begin{subfigure}[b]{0.63\linewidth}
       \includegraphics[width=\linewidth]{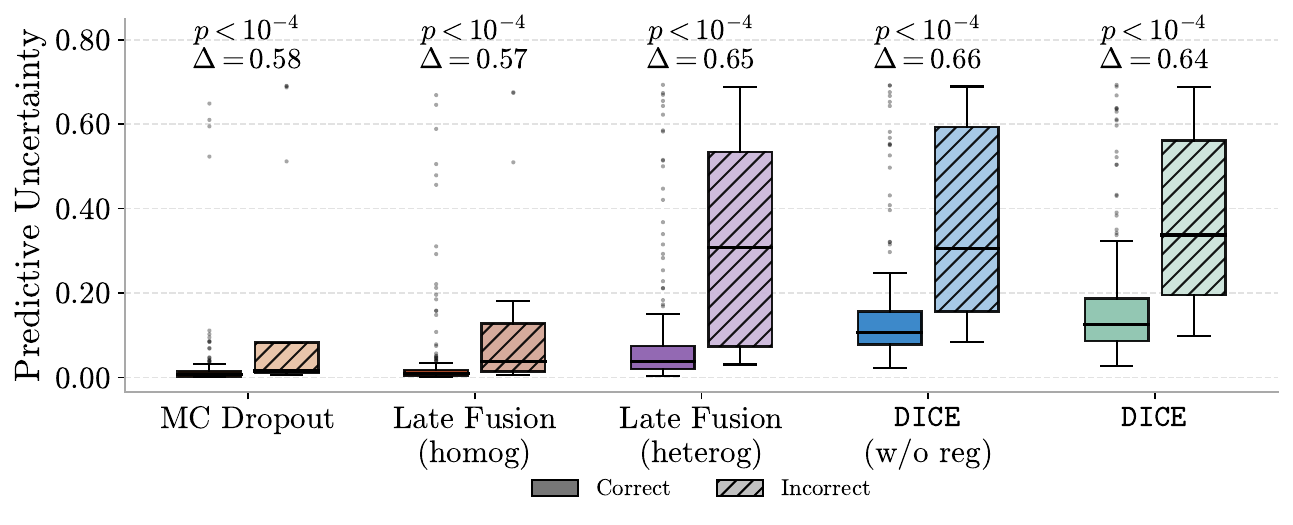}
            \caption{Cross-cohort generalization.}
        \label{fig:16to17}
    \end{subfigure}
    \caption{\textbf{\ourmethod's uncertainty signal generalizes across data splits and cohorts.}
    \textit{Left:} F1 (\%) before vs. after rejecting slides whose predictive uncertainty exceeds a validation-tuned threshold, on validation (light) and test (dark). \textit{Right:} Predictive uncertainty distributions for correct and incorrect predictions on CAMELYON17 from models trained on CAMELYON16. Shown for MC dropout, late fusion variants, and \ourmethod variants. Annotations as in Figure~\ref{fig:slide_uncertainty_panda}.}
    \label{fig:generalization}
\end{figure}

\begin{figure}[htbp]
    \centering
    \includegraphics[width=\linewidth]{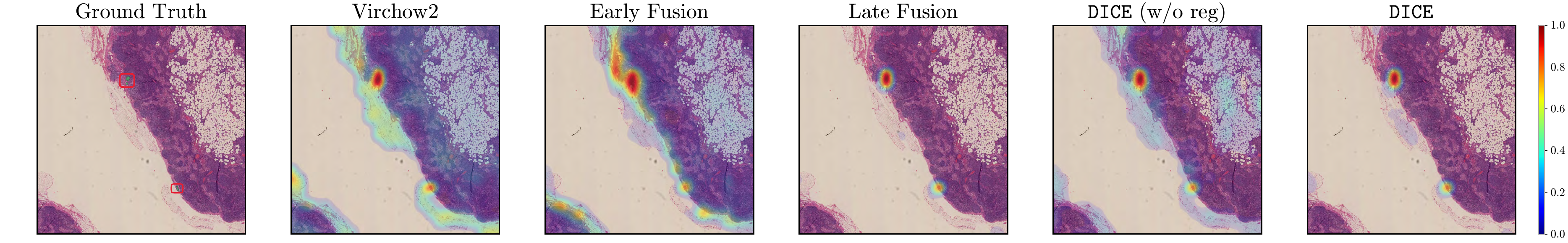}
    \caption{\textbf{Patch-level lesion localization on CAMELYON16.} From left to right: a zoomed-in segment of a WSI with its ground-truth tumor annotations, heatmaps of attention scores of the single PFM with the highest test F1 and of early fusion, and heatmaps of mean attention scores across the $K$ experts for heterogeneous late fusion, \ourmethodnoreg, and \ourmethod. Best viewed in color.}
    \label{fig:attention_score_camelyon16}
\end{figure}

\subsubsection{Patch-level lesion localization}
\label{sec:localization}

Recent work shows that different PFMs attend to different tissue regions when making the same prediction~\cite{neidlinger2025benchmarking}, which motivates assessing whether expert agreement yields more reliable localization than any single model.
We therefore carry out a supplementary analysis to understand whether \ourmethod bases its decisions on clinically relevant areas.
To this end, we define the patch-level consensus as the empirical mean of the $K$ expert attention coefficients, $c^{\mathrm{patch}}_i = \frac{1}{K}\sum_{k=1}^{K}a_i^{(k)}$, which is high on patches that all experts attend to and low on those they consistently ignore.
For late fusion and the two \ourmethod variants we use $c^{\mathrm{patch}}_i$ as the localization signal, while for the single PFM baselines and early fusion we use the MIL head's attention coefficients directly.

\paragraph{Patch-level localization remains competitive despite no supervision.}
As seen in Table~\ref{tab:abmil-main-performance}, on PANDA, \ourmethod edges out heterogeneous late fusion by $3.3$ percentage points and ranks third.
On CAMELYON16, early fusion attains the highest localization AUC overall, with \ourmethodnoreg a close second, and both exceed every single PFM by at least $2.5$ percentage points.
On CAMELYON17, \ourmethod matches the strongest single PFM within fold-level variability.
This experiment shows that attention consensus is a sharp localization signal, especially valuable for the small lesions that are easiest to miss and most time-consuming to localize by hand.
Figure~\ref{fig:attention_score_camelyon16} illustrates this on a CAMELYON16 test slide.
The single PFM attention spreads across stromal tissue near the lesion boundary, while \ourmethod concentrates attention inside the annotated tumor regions.
Additional visualizations across all datasets are in Appendix~\ref{app:results}. 

\subsubsection{Ablation study}

We finally ablate two design choices: the \textit{MIL aggregator} and the \textit{number of experts}.
For the former, we replace ABMIL~\cite{pmlr-v80-ilse18a} with DSMIL~\cite{li2021dual} or TransMIL~\cite{shao2021transmil} while keeping the PFMs and fusion strategy fixed.
For the latter, we vary $K$ from two to five experts to assess the effect of expert diversity.

\paragraph{MIL aggregator.}
Table~\ref{tab:mil-head-5encoder-auc-delta} reports slide-level AUC when ABMIL is replaced by DSMIL or TransMIL.
Across all three datasets, $\Delta$ remains within $\pm2.7$ percentage points of ABMIL.
This stability indicates that our fusion framework is largely agnostic to the choice of MIL aggregator.

\vspace{-5pt}

\begin{table}[h]
\caption{\textbf{Ablation of the MIL head.} We report slide-level AUC (\%) with mean $\pm$ std across five folds in the held-out test set. ABMIL columns report absolute values. DSMIL and TransMIL report absolute values with percentage-point deltas relative to ABMIL.}
\label{tab:mil-head-5encoder-auc-delta}
\vspace{0.5em}
\centering
\resizebox{\textwidth}{!}{%
\begin{tabular}{lccccccccc}
\toprule
 & \multicolumn{3}{c}{\textbf{PANDA}} & \multicolumn{3}{c}{\textbf{CAMELYON16}} & \multicolumn{3}{c}{\textbf{CAMELYON17}} \\
\cmidrule(lr){2-4}\cmidrule(lr){5-7}\cmidrule(lr){8-10}
 & ABMIL & DSMIL ($\Delta$) & TransMIL ($\Delta$) & ABMIL & DSMIL ($\Delta$) & TransMIL ($\Delta$) & ABMIL & DSMIL ($\Delta$) & TransMIL ($\Delta$) \\
\midrule
\ourmethodnoreg & 96.1 \scriptsize{$\pm$ 0.27} & 96.1 \scriptsize{$\pm$ 0.28} (+0.1) & 95.6 \scriptsize{$\pm$ 0.07} (-0.5) & 98.5 \scriptsize{$\pm$ 0.33} & 98.9 \scriptsize{$\pm$ 0.37} (+0.4) & 98.9 \scriptsize{$\pm$ 0.39} (+0.4) & 92.4 \scriptsize{$\pm$ 0.70} & 91.9 \scriptsize{$\pm$ 1.55} (-0.6) & 91.4 \scriptsize{$\pm$ 0.83} (-1.0) \\
\ourmethod & 96.7 \scriptsize{$\pm$ 0.20} & 95.9 \scriptsize{$\pm$ 0.24} (-0.8) & 94.8 \scriptsize{$\pm$ 0.75} (-1.9) & 98.7 \scriptsize{$\pm$ 0.48} & 98.6 \scriptsize{$\pm$ 0.42} (-0.2) & 98.3 \scriptsize{$\pm$ 1.23} (-0.4) & 89.9 \scriptsize{$\pm$ 1.31} & 91.9 \scriptsize{$\pm$ 1.41} (+1.9) & 92.7 \scriptsize{$\pm$ 1.06} (+2.7) \\
\bottomrule
\end{tabular}
}
\end{table}

\vspace{-5pt}

\paragraph{Number of experts.}
Figure~\ref{fig:encoder_count_ablation} shows that increasing the number of experts is most beneficial on PANDA, where \ourmethod improves sharply when all five PFMs are used.
On CAMELYON16, performance is already saturated with fewer experts and changes only marginally as $K$ increases.
On CAMELYON17, the best F1 is obtained with fewer experts and including all five does not uniformly improve performance.
These trends support the ensemble view that the benefits of adding experts stem from their complementarity rather than from ensemble size alone, with diminishing returns once the feature space is well covered.

\begin{figure}[htbp]
    \centering
    \begin{subfigure}[b]{0.32\linewidth}
        \includegraphics[width=\linewidth]{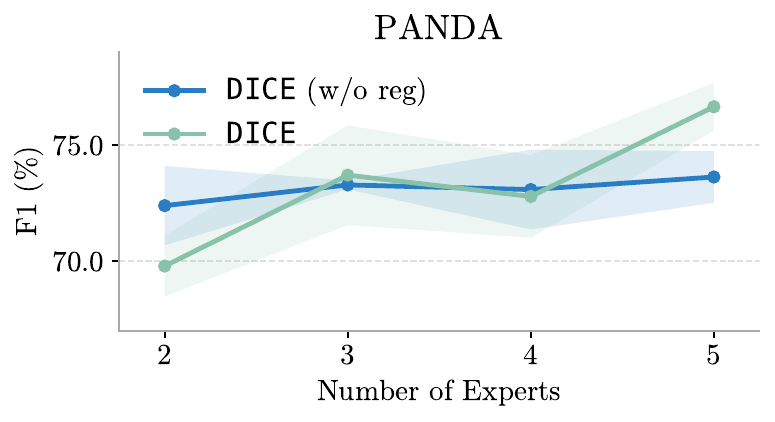}
    \end{subfigure}
    \hfill
    \begin{subfigure}[b]{0.32\linewidth}
        \includegraphics[width=\linewidth]{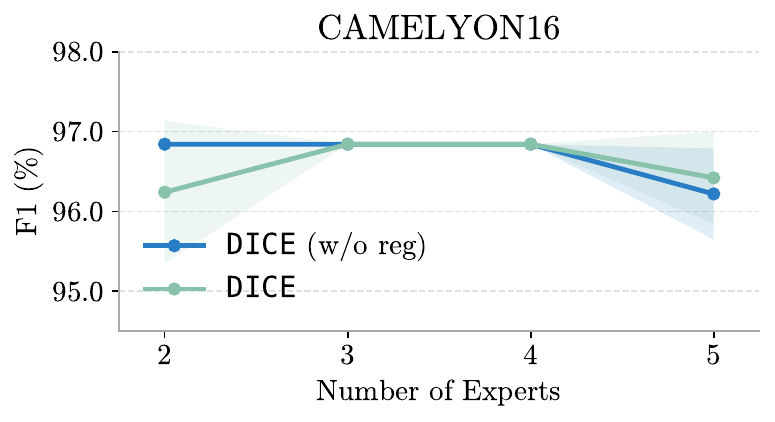}
    \end{subfigure}
    \hfill
    \begin{subfigure}[b]{0.32\linewidth}
        \includegraphics[width=\linewidth]{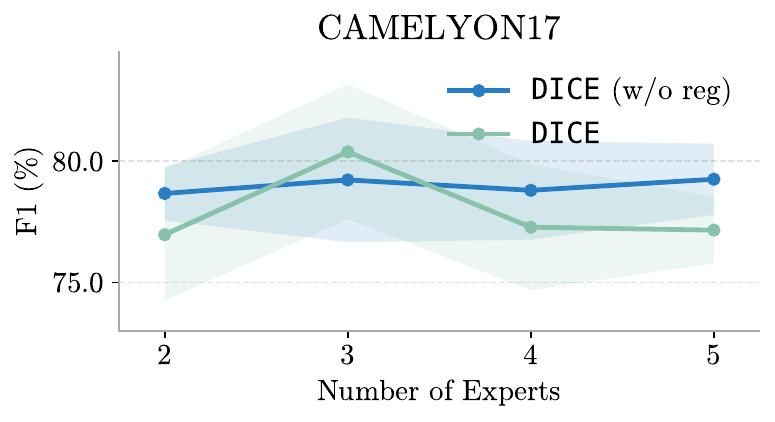}
    \end{subfigure}
    \caption{\textbf{Ablation of the number of experts.} Slide-level F1 (\%) for two to five experts. Note that experts are added in decreasing order of average single PFM test F1 across the three datasets (Virchow2 $>$ UNI2-h $>$ H-optimus-1 $>$ CONCHv1.5 $\approx$ Hibou-L).}
    \label{fig:encoder_count_ablation}
\end{figure}

\section{Conclusion}

In this work, we introduced \ourmethod, a novel plug-and-play framework that leverages pathology foundation model ensembles as a principled tool for slide-level uncertainty estimation and reliable downstream prediction.
\ourmethod couples the ensemble members through deep mutual learning and aligns their representations with the Gramian measure.
In particular, on the downstream datasets, \ourmethod accurately identifies error-prone cases, improves end-task performance compared to SOTA models, and localizes annotated regions without supervision.

However, several limitations remain.
First, combining $K$ frozen PFMs requires extracting $K$ features per slide, which is more expensive than using a single backbone.
Nevertheless, since the PFMs remain frozen and only lightweight MIL heads are trained, the training cost of \ourmethod remains substantially lower than end-to-end fine-tuning of even a single PFM.
Second, \ourmethod depends on a careful balance between alignment and diversity.
While DML and Gramian regularization improve downstream performance, excessive alignment may suppress the disagreement needed for meaningful uncertainty estimation, whereas insufficient alignment limits the benefits of fusion.
We mitigate this trade-off through scheduled regularization that gradually introduces alignment over training, but more principled approaches to setting the alignment strength remain an open question.
Finally, the number of experts $K$ is fixed a priori.
Increasing $K$ can improve robustness, but may also introduce redundancy expected to degrade performance when additional PFMs do not contribute adequately.
All in all, our work paves the way toward more reliable and uncertainty-aware pathology models, contributing to the development of systems that can eventually be integrated into clinical workflows.

\bibliographystyle{plainnat}
\bibliography{references}
\newpage

\tableofcontents
\newpage

\appendix
\section{Appendix}

\subsection{Proofs}
\label{app:proofs}

\begin{proof}[Proof of Proposition~\ref{prop:jsd}]
Let $p_c^{(k)}$ denote the $c^{\text{th}}$ entry of $\mathbf{p}^{(k)}$ and $\bar{p}_c$ the $c^{\text{th}}$ entry of $\bar{\mathbf{p}}$, such that $\bar{p}_c = \frac{1}{K}\sum_{k=1}^{K} p_c^{(k)}$.
By definition
\[
u^{\mathrm{epi}}
= \mathcal{H}(\bar{\mathbf{p}}) - \frac{1}{K}\sum_{k=1}^{K}\mathcal{H}(\mathbf{p}^{(k)})
= -\sum_{c=1}^{C} \bar{p}_c \log \bar{p}_c + \frac{1}{K}\sum_{k=1}^{K}\sum_{c=1}^{C} p_c^{(k)} \log p_c^{(k)}.
\]
Substituting $\bar{p}_c$ and interchanging the summation order gives
\[
-\sum_{c=1}^{C} \bar{p}_c \log \bar{p}_c
= -\sum_{c=1}^{C}\Bigl(\frac{1}{K}\sum_{k=1}^{K} p_c^{(k)}\Bigr) \log \bar{p}_c
= -\frac{1}{K}\sum_{k=1}^{K}\sum_{c=1}^{C} p_c^{(k)} \log \bar{p}_c.
\]
Substituting back into the expression for $u^{\mathrm{epi}}$ and factoring yields
\[
u^{\mathrm{epi}}
= \frac{1}{K}\sum_{k=1}^{K}\sum_{c=1}^{C} p_c^{(k)} \bigl(\log p_c^{(k)} - \log \bar{p}_c\bigr)
= \frac{1}{K}\sum_{k=1}^{K}\sum_{c=1}^{C} p_c^{(k)} \log \frac{p_c^{(k)}}{\bar{p}_c}
= \frac{1}{K}\sum_{k=1}^{K}\mathrm{KL}\!\left(\mathbf{p}^{(k)} \,\|\, \bar{\mathbf{p}}\right).
\]

Recalling the definition of the multi-way Jensen--Shannon divergence~\cite{lin1991divergence}
\[
\mathrm{JSD}\!\left(\mathbf{p}^{(1)},\dots,\mathbf{p}^{(K)}\right)
:=
\frac{1}{K}\sum_{k=1}^{K}\mathrm{KL}\!\left(\mathbf{p}^{(k)} \,\Big\|\, \frac{1}{K}\sum_{\ell=1}^{K}\mathbf{p}^{(\ell)}\right),
\]
shows that it coincides with the expression of $u^{\mathrm{epi}}$. Note that each term $\mathrm{KL}(\mathbf{p}^{(k)} \| \bar{\mathbf{p}})$ is non-negative by Gibbs' inequality, with equality if and only if $\mathbf{p}^{(k)} = \bar{\mathbf{p}}$. Hence the average $u^{\mathrm{epi}}$ is non-negative, and $u^{\mathrm{epi}} = 0$ if and only if $\mathbf{p}^{(k)} = \bar{\mathbf{p}}$ for every $k$, which is equivalent to $\mathbf{p}^{(1)} = \cdots = \mathbf{p}^{(K)}$.
\end{proof}

\begin{proof}[Proof of Proposition~\ref{prop:dml_bound}]
Using Proposition~\ref{prop:jsd},
\[
u^{\mathrm{epi}} = \frac{1}{K}\sum_{k=1}^{K} \mathrm{KL}\!\left( \mathbf{p}^{(k)} \,\Big\|\, \frac{1}{K}\sum_{\ell=1}^{K}\mathbf{p}^{(\ell)} \right).
\]
Fixing $k$ and applying Jensen's inequality yields
\[ \mathrm{KL}\!\left(\mathbf{p}^{(k)} \,\Big\|\, \frac{1}{K}\sum_{\ell=1}^{K}\mathbf{p}^{(\ell)} \right) \le \frac{1}{K}\sum_{\ell=1}^{K} \mathrm{KL}\!\left( \mathbf{p}^{(k)} \,\Big\|\, \mathbf{p}^{(\ell)} \right)
\]
since KL is convex in the second argument.
Averaging over $k$ gives
\[ u^{\mathrm{epi}} \le \frac{1}{K^2} \sum_{k=1}^{K}\sum_{\ell=1}^{K} \mathrm{KL}\!\left( \mathbf{p}^{(k)} \,\Big\|\, \mathbf{p}^{(\ell)} \right).
\]
When $k=\ell$, we have $\mathrm{KL}\!\left( \mathbf{p}^{(k)} \,\Big\|\, \mathbf{p}^{(\ell)} \right) = 0$, and the diagonal terms vanish yielding
\[ u^{\mathrm{epi}} \le \frac{1}{K^2} \sum_{k\neq \ell} \mathrm{KL}\!\left( \mathbf{p}^{(k)} \,\Big\|\, \mathbf{p}^{(\ell)} \right).
\]
Finally, relabeling indices in the double sum gives
\[ \sum_{k\neq \ell} \mathrm{KL}\!\left( \mathbf{p}^{(k)} \,\Big\|\, \mathbf{p}^{(\ell)} \right) = \sum_{k\neq \ell} \mathrm{KL}\!\left( \mathbf{p}^{(\ell)} \,\Big\|\, \mathbf{p}^{(k)} \right) = (K-1)\sum_{k=1}^{K}\mathcal{L}_{\mathrm{DML}}^{(k)} = (K-1)\mathcal{L}_{\mathrm{DML}},
\]
which proves the result.
\end{proof}

\subsection{Datasets}
\label{app:datasets}

\paragraph{PANDA.}
The Prostate cANcer graDe Assessment (PANDA) challenge dataset~\cite{bulten2022artificial} (CC BY-SA-NC 4.0) comprises 10,616 H\&E-stained prostate biopsy whole-slide images acquired at two centers, the Karolinska Institute and Radboud University Medical Center.
Each slide carries an ISUP grade label in $\{0, 1, 2, 3, 4, 5\}$, defining a 6-class cancer severity grading task.
As the publicly released labels are known to contain noise, we discard 603 slides flagged by the keep-list released with the first-place solution of the 2020 Kaggle competition~\cite{Yoshioka_Kaggle-PANDA-1st-place-solution_2024}, retaining 10,013 slides.
We then hold out 20\% of this cleaned cohort as a fixed test set, since the original test set is not released.
For patch-level evaluation, we leverage the pixel-level annotations released with the dataset.
Karolinska masks distinguish background, benign, and cancerous tissue, whereas, Radboud masks distinguish background, stroma, healthy epithelium, and Gleason patterns 3, 4, and 5.
We binarize both into a tumor / non-tumor map by collapsing the cancerous classes into a single tumor class.
A patch receives a positive label if it contains any tumorous pixel.

\paragraph{CAMELYON16.}
The CAMELYON16 challenge dataset~\cite{ehteshami2017diagnostic} (CC0) comprises 399 H\&E-stained whole-slide images of sentinel lymph nodes from breast cancer patients, collected at two centers.
We formulate the downstream task as a slide-level binary classification of metastasis and follow the official train/test split.
For patch-level evaluation, we leverage the pixel-level metastasis annotations, and a patch receives a positive label if it contains any annotated pixel.

\paragraph{CAMELYON17.}
The CAMELYON17 dataset~\cite{litjens20181399} (CC0) extends CAMELYON16 to a multi-center, patient-level setting, comprising 1000 H\&E-stained whole-slide images of sentinel lymph nodes from 200 breast cancer patients (five slides each).
Each slide carries a pN-stage label, which we binarize for metastasis classification.
Since slide-level labels for the official test split are not public, we adopt the annotations from~\citet{ling2025cbpln} and follow their quality-control protocol, excluding 49 slides affected by focal blurriness, poor staining, ambiguous metastatic foci, or treatment-related artifacts. 
We then build a fixed test set by assigning all patients with at least one lesion-annotated slide to the test set and randomly adding negative patients until it covers 20\% of the cohort.
The remainder is used for five-fold cross-validation, with splits stratified jointly by patient and label to prevent leakage.
For patch-level evaluation, we use lesion-level annotations from 50 slides and treat all patches from negative slides as negatives.
Finally, we apply the same tumor-area threshold as in CAMELYON16 to ensure consistency across datasets.

\subsection{Pathology foundation models}
\label{app:foundation_models}
 
We describe the pathology foundation models used to instantiate our experts, which span diverse architectures, pretraining objectives, and training corpora.
 
\paragraph{Virchow2} is proposed by~\citet{zimmermann2024virchow2} as a ViT-H/14 ($\sim$632M parameters) trained with a domain-adapted DINOv2 architecture on $\sim$1.7 billion tiles sampled at multiple magnifications ($5\times$, $10\times$, $20\times$, $40\times$) from 3.1 million H\&E- or IHC-stained whole-slide images collected at the Memorial Sloan Kettering Cancer Center.
 
\paragraph{UNI2-h} is introduced by~\citet{chen2024uni} as a ViT-H/14 model trained with the DINOv2 framework on more than 200 million tiles sampled from over 350K H\&E- and IHC-stained whole-slide images sourced from the Mass General Brigham.
 
\paragraph{H-optimus-1} is put forth by~\citet{hoptimus1} as a 1.1B-parameter ViT-G/14 trained with DINOv2 on a proprietary dataset of several billion H\&E tiles sampled from over 1 million slides of more than 800,000 patients across 50 organs.
 
\paragraph{CONCHv1.5} is based on the CONCH model~\cite{lu2024visual}, which is a vision-language model pretrained on 1.17 million image-caption pairs in histopathology.
It is initialized from the UNI checkpoint~\cite{chen2024uni} and further trained with the CoCa objective~\cite{yu2022coca}.
 
\paragraph{Hibou-L} is built by~\citet{nechaev2024hibou} as a ViT-L/14 ($\sim$307M parameters) model trained using the DINOv2 framework on 1.2 billion patches sampled from a proprietary dataset comprising over 1 million whole-slide images.

\subsection{Baselines}
\label{app:baselines}

\paragraph{MC Dropout.}
We implement Monte Carlo Dropout~\cite{gal2016dropout} on top of the single PFM baseline that reaches the highest test F1 for each dataset.
Specifically, dropout with probability $p=0.25$ is applied to the projected patch features, the attention hidden representation, and the bag embedding.
At test time, the model is kept in evaluation mode while dropout layers remain active, and we perform $T=15$ stochastic forward passes per slide.

\paragraph{Early fusion.}
For each slide, patch-level features from all $K=5$ single PFMs are extracted on a shared patch grid and concatenated at the patch level.
A single MIL head is then trained under the standard supervised loss.

\paragraph{Late fusion (homog).}
Homogeneous late fusion trains $K=5$ MIL heads with different random seeds on top of the single PFM that achieves the highest test F1 on each dataset.
Member posteriors are then averaged at inference.

\paragraph{Late fusion (heterog).}
Heterogeneous late fusion uses $K=5$ MIL heads, each independently trained on a different PFM backbone, and averages their posteriors at inference.
The five PFMs match those used in \ourmethod.
 
\subsection{Evaluation metrics}
\label{app:metrics}

\paragraph{Classification.}
We report the area under the receiver operating characteristic curve (AUC) and the F1 score at the slide level.
AUC is computed from the posteriors: one-versus-rest and macro-averaged for multi-class tasks, and reduces to the standard ROC-AUC on the positive-class posterior for binary tasks.
F1 is computed from hard predictions obtained by argmax of the posteriors, with macro averaging for multi-class tasks and the standard positive-class F1 for binary tasks.

\paragraph{Calibration.}
We report the negative log-likelihood (NLL) and the Brier score on the slide-level posteriors:
\[
\mathrm{NLL} = -\frac{1}{M}\sum_{m=1}^{M} \log p_{y_m}(m),
\qquad
\mathrm{Brier} = \frac{1}{MC}\sum_{m=1}^{M} \sum_{c=1}^{C}
\bigl(p_c(m) - \mathds{1}[y_m{=}c]\bigr)^2,
\]
where $p_c(m)$ denotes the predicted probability of class $c$ for slide $m$ and $M$ is the total number of slides in the dataset.
For binary tasks, the Brier score reduces to the mean squared error between the positive-class posterior and the binary label.
Lower is better for both metrics.

\paragraph{Localization.}
Given that our datasets are pixel-level annotated, we can evaluate whether per-patch attention scores localize tumorous regions.
Since the within-slide softmax makes raw attention values depend on slide size, we min-max normalize $c_i^{(\mathrm{patch})}$ within each slide, and report the patch-level AUC of the normalized scores against the binary tumor labels constructed above.

\subsection{Implementation details}
\label{app:implementation}

\paragraph{Data preprocessing.}
Slides are tiled into non-overlapping $512{\times}512$-pixel patches at $20\times$ magnification, with tissue regions identified by the HEST segmenter~\cite{jaume2024hest} at a confidence threshold of $0.5$ and a minimum per-patch tissue proportion of $0.2$.
Then, we extract patch-level features from each pathology foundation model using the TRIDENT toolkit~\cite{zhang2025standardizing}. 

\paragraph{Model selection.}
When no official test split is available, we hold out a fixed 20\% test split and run a five-fold stratified cross-validation on the remaining slides.
Stratification is on the binary metastasis label for CAMELYON16 and CAMELYON17, and jointly on the data provider and ISUP grade for PANDA.
Models are trained for up to 50 epochs with a batch size of 16 (reduced to 8 for TransMIL due to its higher memory usage), with early stopping on validation loss, using a patience of 10 epochs and a minimum improvement threshold of $10^{-3}$.
The best epoch per fold is selected based on the minimum validation loss.
We report the mean and standard deviation across the five folds using the fixed test split.

\paragraph{Hyperparameter tuning.}
Hyperparameters are selected with Optuna~\cite{optuna_2019} using the default Tree-structured Parzen Estimator (TPE) sampler and a median pruner that begins evaluating trials after the first five are complete and prunes thereafter at every fold.
Each trial is run across all five folds with fold-level intermediate reporting, so that unpromising configurations are stopped early.
Trials are ranked by mean validation AUC across folds, with ties broken first by lower mean validation loss and then by lower standard deviation of the validation loss.
Within each fold, trials are trained for up to 20 epochs, and early stopping and best-checkpoint selection use the epoch with the lowest validation loss.
The search space is shared across experiments:
\begin{itemize}
    \item Projection dimension $d_r \in \{128, 256, 512, 1024\}$,
    \item Dropout rate $\in \{0.0, 0.1, 0.2, 0.3, 0.4\}$,
    \item Learning rate log-uniform in $[10^{-5},\, 5 \times 10^{-4}]$,
    \item MIL attention dimension $\in \{64, 128, 256\}$,
    \item MIL hidden dimension $\in \{128, 256, 512\}$, or no hidden layer,
    \item Gramian schedule $\in \{\text{linear},\text{cosine}\}$ starting at $0.01$ and ending at $0.50$.
\end{itemize}
A single hyperparameter configuration is shared across the $K$ experts of a \ourmethod or \ourmethodnoreg trial, whereas all baselines are tuned independently.
We run 20 trials for CAMELYON16 and CAMELYON17, and 10 trials for PANDA, whose larger size makes each trial substantially more expensive.
Finally, for the classification with rejection experiment, we tune the rejection threshold on the validation set by evaluating rejection rates between 5\% and 50\% at 1\% intervals.
For each rejection rate, we identify the uncertainty threshold yielding the highest F1 score, and subsequently apply it to the test set, where highly uncertain samples are deferred for secondary review.

\subsection{Reproducibility statement}
\label{app:reproducibility}

Our implementation uses Python 3.10 and PyTorch, and we will provide the full training and evaluation pipeline upon publication, together with a complete list of dependencies.
All experiments were run on a single NVIDIA H100 (80GB) GPU with CUDA 12.0, and we fixed the pseudo-random seed across all stochastic components.

\subsection{Discussion of concurrent work}
\label{app:concurrent}

Unfortunately, none of the mentioned methods in Section~\ref{sec:setup} is directly comparable to ours, for two distinct reasons.
First, ELF~\cite{luo2025ensemble} and FuseCPath~\cite{yang2025fusion} are unpublished preprints whose released artifacts are insufficient to reproduce their pipelines.
Specifically, the ELF pretrained weights are unavailable and the FuseCPath repository explicitly states that their code is only partially complete, with the remainder deferred until paper acceptance.
FM2~\cite{yu2025fm2} has, to our knowledge, no public code or weights at the time of submission.
Second, PICTURE~\cite{zhao2025uncertainty} is only applicable to glioblastoma diagnosis.
Its prototype-guided inference step requires expert-curated reference images that distinguish glioblastoma from its mimics, and constructing analogous prototype sets for prostate ISUP grading or breast lymph-node metastasis would require dedicated clinical curation that is outside the scope of our paper.

\subsection{Additional related work}
\label{app:related}

\paragraph{Multiple instance learning.}
A standard way to obtain slide-level representations from patch-level PFMs is multiple instance learning (MIL), where a slide is represented as a \textit{bag} of patch-level \textit{instances}, and only a bag label is available.
While the fundamental attention-based MIL method (ABMIL)~\cite{pmlr-v80-ilse18a} and its extensions, such as CLAM~\cite{lu2021data}, DSMIL~\cite{li2021dual}, and TransMIL~\cite{shao2021transmil} excel at weakly supervised classification, their clinical utility is limited by a lack of explicit bag-level confidence.
To tackle this, BayesMIL~\cite{cui2023bayesmil} learns a posterior over attention weights via variational inference, but its reliance on multiple stochastic forward passes (typically on the order of 15--20) and associated optimization challenges hinder its general use.

\subsection{Additional results}
\label{app:results}

This section reports additional metrics that complement the results in the main paper.
Table~\ref{tab:abmil-classification-performance} reports AUC and F1 for slide-level classification, while Table~\ref{tab:abmil-calibration-performance} reports NLL and Brier score for slide-level calibration.
Across all these metrics, the results are consistent with our main findings.

\begin{table}[htbp]
\caption{Slide-level classification performance on PANDA, CAMELYON16, and CAMELYON17. Values are mean $\pm$ std of AUC (\%) and F1 (\%) across five folds on the held-out test set. MCD and LF (homog) are built on the single PFM with the highest test F1 per dataset, while LF (heterog) and \ourmethod{} variants use all five PFMs. Bold and underline mark the best and second-best per column.}
\label{tab:abmil-classification-performance}
\vspace{0.5em}
\centering
\resizebox{\textwidth}{!}{%
\begin{tabular}{lcccccc}
\toprule
 & \multicolumn{2}{c}{\textbf{PANDA}} & \multicolumn{2}{c}{\textbf{CAMELYON16}} & \multicolumn{2}{c}{\textbf{CAMELYON17}} \\
\cmidrule(lr){2-3}\cmidrule(lr){4-5}\cmidrule(lr){6-7}
 & AUC ($\uparrow$) & F1 ($\uparrow$) & AUC ($\uparrow$) & F1 ($\uparrow$) & AUC ($\uparrow$) & F1 ($\uparrow$) \\
\midrule
\multicolumn{7}{l}{\textbf{Single PFMs}} \\
Virchow2 & 95.7 \scriptsize{$\pm$ 0.24} & 72.8 \scriptsize{$\pm$ 1.51} & \textbf{99.3 \scriptsize{$\pm$ 0.41}} & \underline{96.3 \scriptsize{$\pm$ 0.59}} & 90.3 \scriptsize{$\pm$ 1.52} & 79.8 \scriptsize{$\pm$ 2.24} \\
UNI2-h & 95.1 \scriptsize{$\pm$ 0.12} & 69.7 \scriptsize{$\pm$ 0.82} & 98.7 \scriptsize{$\pm$ 0.49} & 95.7 \scriptsize{$\pm$ 1.96} & 89.2 \scriptsize{$\pm$ 2.22} & \textbf{80.6 \scriptsize{$\pm$ 3.47}} \\
H-optimus-1 & 95.3 \scriptsize{$\pm$ 0.12} & 71.2 \scriptsize{$\pm$ 0.87} & 97.9 \scriptsize{$\pm$ 0.83} & 94.4 \scriptsize{$\pm$ 1.15} & 85.0 \scriptsize{$\pm$ 2.60} & 78.2 \scriptsize{$\pm$ 2.27} \\
CONCHv1.5 & 94.3 \scriptsize{$\pm$ 0.09} & 68.7 \scriptsize{$\pm$ 1.32} & 96.1 \scriptsize{$\pm$ 0.82} & 94.5 \scriptsize{$\pm$ 0.99} & \underline{91.0 \scriptsize{$\pm$ 2.09}} & 78.0 \scriptsize{$\pm$ 3.48} \\
Hibou-L & 94.8 \scriptsize{$\pm$ 0.07} & 69.9 \scriptsize{$\pm$ 0.88} & 96.2 \scriptsize{$\pm$ 1.22} & 92.4 \scriptsize{$\pm$ 0.90} & 85.6 \scriptsize{$\pm$ 3.29} & 79.2 \scriptsize{$\pm$ 3.86} \\
MC Dropout & 95.7 \scriptsize{$\pm$ 0.26} & 72.3 \scriptsize{$\pm$ 1.69} & 99.1 \scriptsize{$\pm$ 0.87} & 95.2 \scriptsize{$\pm$ 0.60} & 88.5 \scriptsize{$\pm$ 2.99} & 79.0 \scriptsize{$\pm$ 5.61} \\
\midrule
\multicolumn{7}{l}{\textbf{Fusion Methods}} \\
Early Fusion & 96.0 \scriptsize{$\pm$ 0.19} & 73.6 \scriptsize{$\pm$ 1.02} & 98.1 \scriptsize{$\pm$ 0.95} & 95.7 \scriptsize{$\pm$ 1.75} & 90.8 \scriptsize{$\pm$ 1.33} & \underline{80.3 \scriptsize{$\pm$ 3.54}} \\
Late Fusion (homog) & 95.9 \scriptsize{$\pm$ 0.16} & \underline{73.8 \scriptsize{$\pm$ 0.11}} & \underline{99.3 \scriptsize{$\pm$ 0.58}} & 95.8 \scriptsize{$\pm$ 0.00} & 90.2 \scriptsize{$\pm$ 2.17} & 79.5 \scriptsize{$\pm$ 2.38} \\
Late Fusion (heterog) & 96.0 \scriptsize{$\pm$ 0.15} & 73.4 \scriptsize{$\pm$ 0.81} & 98.3 \scriptsize{$\pm$ 0.71} & 96.2 \scriptsize{$\pm$ 0.57} & 90.4 \scriptsize{$\pm$ 1.82} & 78.9 \scriptsize{$\pm$ 2.99} \\
\midrule
\multicolumn{7}{l}{\textbf{Ours}} \\
\rowcolor{gray!20} \ourmethodnoreg & \underline{96.1 \scriptsize{$\pm$ 0.27}} & 73.6 \scriptsize{$\pm$ 1.11} & 98.5 \scriptsize{$\pm$ 0.33} & 96.2 \scriptsize{$\pm$ 0.57} & \textbf{92.4 \scriptsize{$\pm$ 0.70}} & 79.3 \scriptsize{$\pm$ 1.46} \\
\rowcolor{gray!20} \ourmethod & \textbf{96.7 \scriptsize{$\pm$ 0.20}} & \textbf{76.6 \scriptsize{$\pm$ 1.02}} & 98.7 \scriptsize{$\pm$ 0.48} & \textbf{96.4 \scriptsize{$\pm$ 0.58}} & 89.9 \scriptsize{$\pm$ 1.31} & 77.2 \scriptsize{$\pm$ 1.36} \\
\bottomrule
\end{tabular}
}
\end{table}

\begin{table}[htbp]
\caption{Slide-level calibration performance on PANDA, CAMELYON16, and CAMELYON17. Values are mean $\pm$ std of NLL ($\times 100$) and Brier score ($\times 100$), across five folds on the held-out test set. MCD and LF (homog) are built on the single PFM with the highest test F1 per dataset, while LF (heterog) and \ourmethod{} variants use all five PFMs. Bold and underline mark the best and second-best per column.}
\label{tab:abmil-calibration-performance}
\vspace{0.5em}
\centering
\resizebox{\textwidth}{!}{%
\begin{tabular}{lcccccc}
\toprule
 & \multicolumn{2}{c}{\textbf{PANDA}} & \multicolumn{2}{c}{\textbf{CAMELYON16}} & \multicolumn{2}{c}{\textbf{CAMELYON17}} \\
\cmidrule(lr){2-3}\cmidrule(lr){4-5}\cmidrule(lr){6-7}
 & NLL ($\downarrow$) & Brier ($\downarrow$) & NLL ($\downarrow$) & Brier ($\downarrow$) & NLL ($\downarrow$) & Brier ($\downarrow$) \\
\midrule
\multicolumn{7}{l}{\textbf{Single PFMs}} \\
Virchow2 & 58.0 \scriptsize{$\pm$ 1.30} & 5.1 \scriptsize{$\pm$ 0.14} & 14.8 \scriptsize{$\pm$ 2.44} & \underline{2.4 \scriptsize{$\pm$ 0.37}} & 58.6 \scriptsize{$\pm$ 11.55} & 14.6 \scriptsize{$\pm$ 1.28} \\
UNI2-h & 60.0 \scriptsize{$\pm$ 0.55} & 5.5 \scriptsize{$\pm$ 0.08} & 20.6 \scriptsize{$\pm$ 8.26} & 3.0 \scriptsize{$\pm$ 1.28} & 60.2 \scriptsize{$\pm$ 10.49} & 14.6 \scriptsize{$\pm$ 2.05} \\
H-optimus-1 & 59.1 \scriptsize{$\pm$ 0.92} & 5.4 \scriptsize{$\pm$ 0.12} & 18.8 \scriptsize{$\pm$ 3.38} & 3.5 \scriptsize{$\pm$ 0.43} & 65.4 \scriptsize{$\pm$ 12.87} & 16.1 \scriptsize{$\pm$ 1.54} \\
CONCHv1.5 & 66.3 \scriptsize{$\pm$ 0.80} & 5.9 \scriptsize{$\pm$ 0.06} & 16.9 \scriptsize{$\pm$ 3.51} & 3.7 \scriptsize{$\pm$ 0.70} & 59.0 \scriptsize{$\pm$ 11.16} & 15.4 \scriptsize{$\pm$ 2.39} \\
Hibou-L & 62.6 \scriptsize{$\pm$ 0.72} & 5.6 \scriptsize{$\pm$ 0.04} & 26.7 \scriptsize{$\pm$ 4.80} & 5.3 \scriptsize{$\pm$ 0.45} & 65.9 \scriptsize{$\pm$ 7.82} & 15.5 \scriptsize{$\pm$ 2.25} \\
MC Dropout & 56.3 \scriptsize{$\pm$ 1.70} & 5.1 \scriptsize{$\pm$ 0.15} & 15.9 \scriptsize{$\pm$ 5.22} & 3.0 \scriptsize{$\pm$ 0.39} & 68.6 \scriptsize{$\pm$ 29.89} & 15.5 \scriptsize{$\pm$ 3.63} \\
\midrule
\multicolumn{7}{l}{\textbf{Fusion Methods}} \\
Early Fusion & 54.5 \scriptsize{$\pm$ 0.71} & 5.0 \scriptsize{$\pm$ 0.12} & 19.6 \scriptsize{$\pm$ 4.64} & 3.2 \scriptsize{$\pm$ 1.07} & 57.5 \scriptsize{$\pm$ 10.67} & \underline{14.3 \scriptsize{$\pm$ 1.99}} \\
Late Fusion (homog) & 53.8 \scriptsize{$\pm$ 0.95} & 4.9 \scriptsize{$\pm$ 0.07} & 12.1 \scriptsize{$\pm$ 1.72} & \textbf{2.4 \scriptsize{$\pm$ 0.11}} & 60.3 \scriptsize{$\pm$ 11.22} & 14.8 \scriptsize{$\pm$ 1.88} \\
Late Fusion (heterog) & \underline{53.3 \scriptsize{$\pm$ 1.20}} & \underline{4.9 \scriptsize{$\pm$ 0.12}} & \underline{11.7 \scriptsize{$\pm$ 2.56}} & 2.6 \scriptsize{$\pm$ 0.23} & \underline{54.2 \scriptsize{$\pm$ 8.65}} & 14.6 \scriptsize{$\pm$ 1.71} \\
\midrule
\multicolumn{7}{l}{\textbf{Ours}} \\
\rowcolor{gray!20} \ourmethodnoreg & 54.6 \scriptsize{$\pm$ 2.97} & 5.0 \scriptsize{$\pm$ 0.24} & 12.2 \scriptsize{$\pm$ 1.06} & 2.9 \scriptsize{$\pm$ 0.25} & \textbf{50.5 \scriptsize{$\pm$ 2.98}} & \textbf{13.8 \scriptsize{$\pm$ 0.60}} \\
\rowcolor{gray!20} \ourmethod & \textbf{48.9 \scriptsize{$\pm$ 1.68}} & \textbf{4.5 \scriptsize{$\pm$ 0.15}} & \textbf{11.5 \scriptsize{$\pm$ 0.91}} & 2.7 \scriptsize{$\pm$ 0.15} & 54.6 \scriptsize{$\pm$ 3.04} & 15.5 \scriptsize{$\pm$ 0.68} \\
\bottomrule
\end{tabular}
}
\end{table}

\clearpage

\begin{figure}[htbp]
    \centering
    \begin{subfigure}[b]{0.34\linewidth}
        \raisebox{7pt}{\includegraphics[width=\linewidth]{figures/method_risk_coverage_PANDA_5enc_predictive_entropy_with_mc_dropout.pdf}}
    \end{subfigure}
    \hfill
    \begin{subfigure}[b]{0.64\linewidth}
    \includegraphics[width=\linewidth]{figures/panda_abmil_methods_with_single_mcd_5enc_predictive_entropy_average_probs_correctness_box.pdf}
    \end{subfigure}
    \\[0.5em]
    \begin{subfigure}[b]{0.34\linewidth}
        \raisebox{7pt}
        {\includegraphics[width=\linewidth]{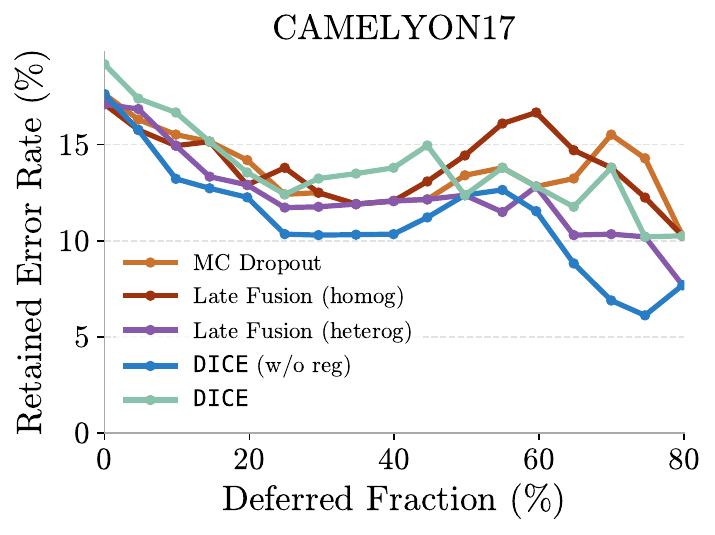}}
    \end{subfigure}
    \hfill
    \begin{subfigure}[b]{0.64\linewidth}
        \includegraphics[width=\linewidth]{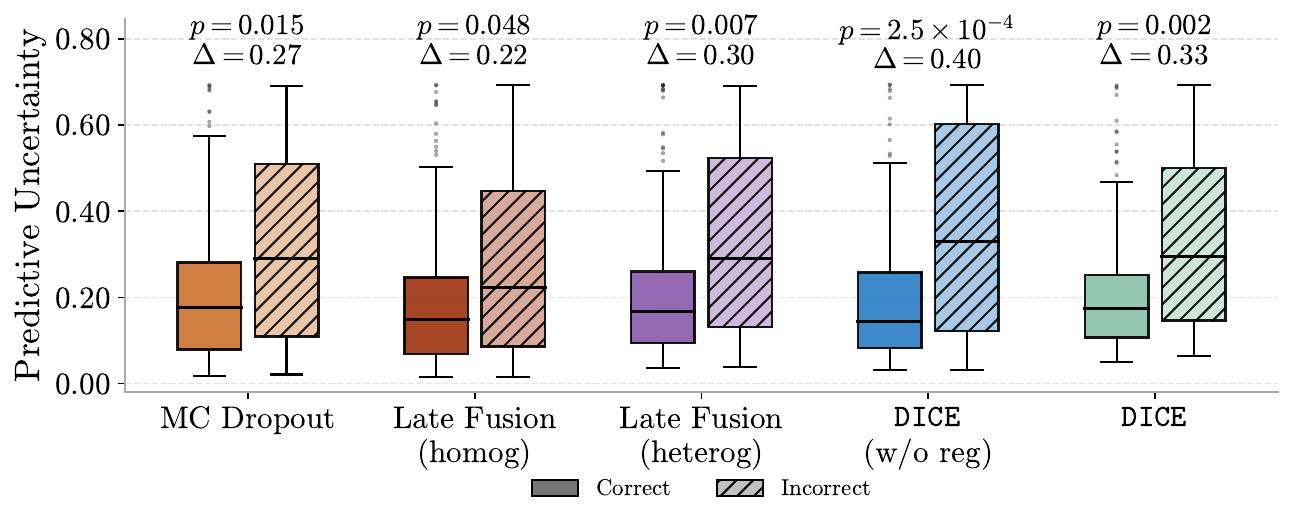}
    \end{subfigure}
    \caption{Analogue of Figure~\ref{fig:slide_uncertainty_panda}. On CAMELYON17, \ourmethodnoreg achieves the strongest separation between correct and incorrect predictions ($\Delta = 0.40$, $p = 2.5 \times 10^{-4}$), outperforming both MC dropout and fusion baselines. The deferral curves likewise show \ourmethodnoreg achieving the lowest retained error across most of the deferral range.}
    \label{fig:slide_uncertainty_all}
\end{figure}

\begin{figure}[htbp]
    \centering
    \begin{subfigure}[b]{0.49\linewidth}
        \includegraphics[width=\linewidth]{figures/panda_mc_dropout_late_fusion_dice_predictive_entropy_f1_validation_optimized.pdf}
    \end{subfigure}
    \hfill
    \begin{subfigure}[b]{0.49\linewidth}
        \raisebox{2pt}{\includegraphics[width=\linewidth]{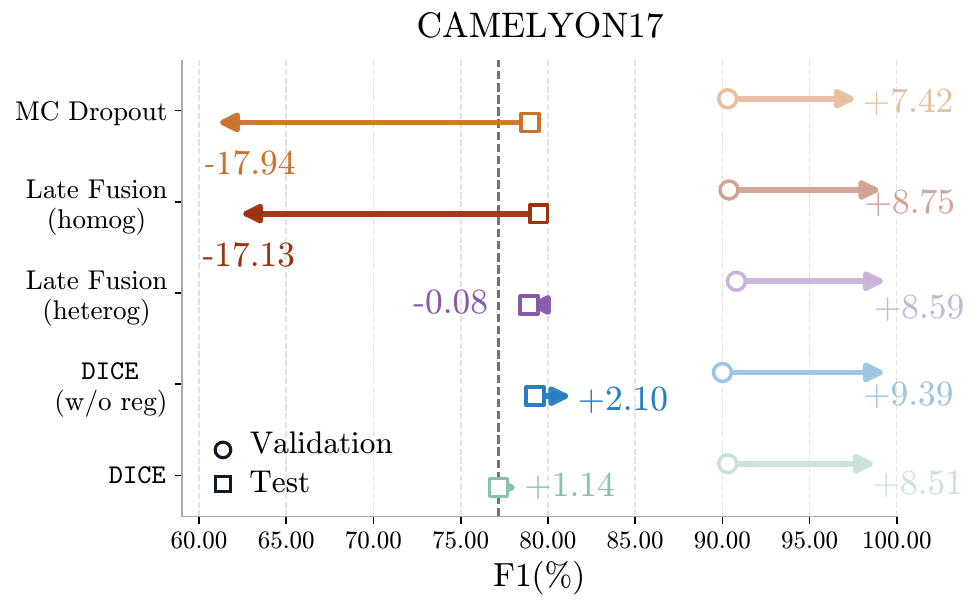}}
    \end{subfigure}
    \caption{Per-dataset analogue of Figure~\ref{fig:threshold_rejection_panda}. On CAMELYON17, all methods improve on validation, however, the selected threshold transfers poorly for models that use only a single PFM as the backbone. Heterogeneous late fusion remains nearly unchanged and only the \ourmethod variants retain positive gains.}
    \label{fig:threshold_rejection_all}
\end{figure}

\begin{figure}[htbp]
    \centering
    \begin{subfigure}[b]{\linewidth}
        \includegraphics[width=\linewidth]{figures/camelyon16_to_camelyon17_abmil_methods_with_single_mcd_5enc_predictive_entropy_average_probs_ood_generalization_box.pdf}
    \end{subfigure}
    \hfill
    \begin{subfigure}[b]{\linewidth}
        \includegraphics[width=\linewidth]{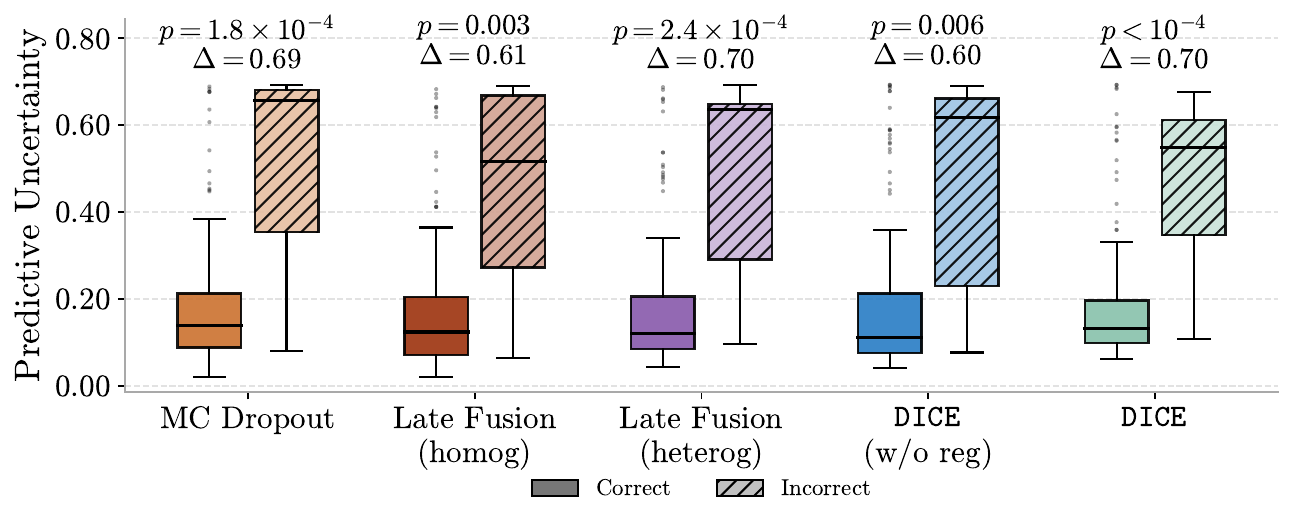}
    \end{subfigure}
    \caption{Predictive uncertainty distributions for correct and incorrect predictions under cohort shift. \textit{Top:} CAMELYON16 $\rightarrow$ CAMELYON17 (400 slides for training). \textit{Bottom:} CAMELYON17 $\rightarrow$  CAMELYON16 (1000 slides for training). Baselines separate the two distributions more clearly when trained on a larger cohort, but \ourmethod variants achieve the best separation in both directions. Annotations as in Figure~\ref{fig:slide_uncertainty_panda}.}
    \label{fig:cross_cohort}
\end{figure}

\clearpage

\begin{figure}[htbp]
    \centering
    \begin{subfigure}[b]{\linewidth}
        \includegraphics[width=\linewidth]{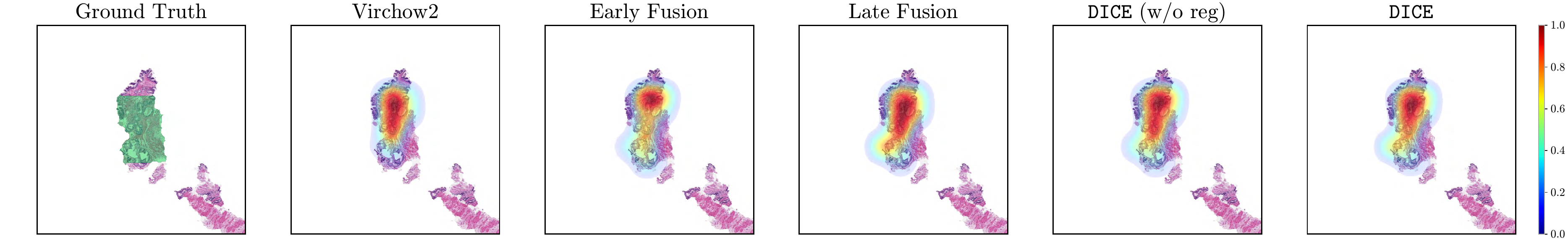}\\[0.3em]
        \includegraphics[width=\linewidth]{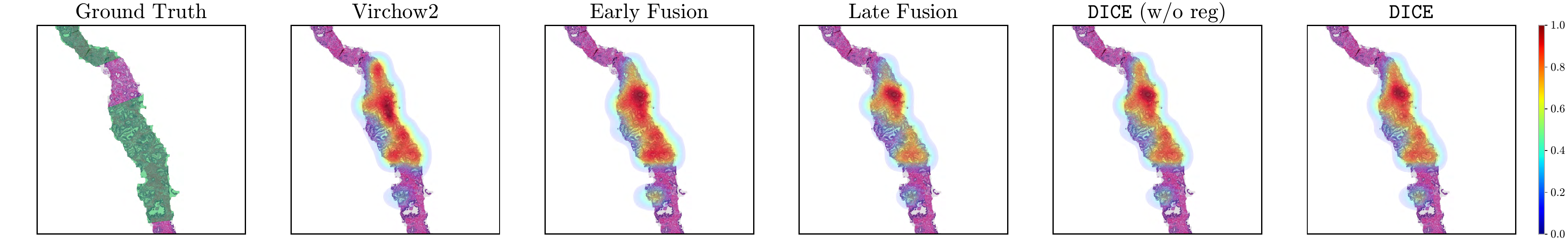}\\[0.3em]
        \includegraphics[width=\linewidth]{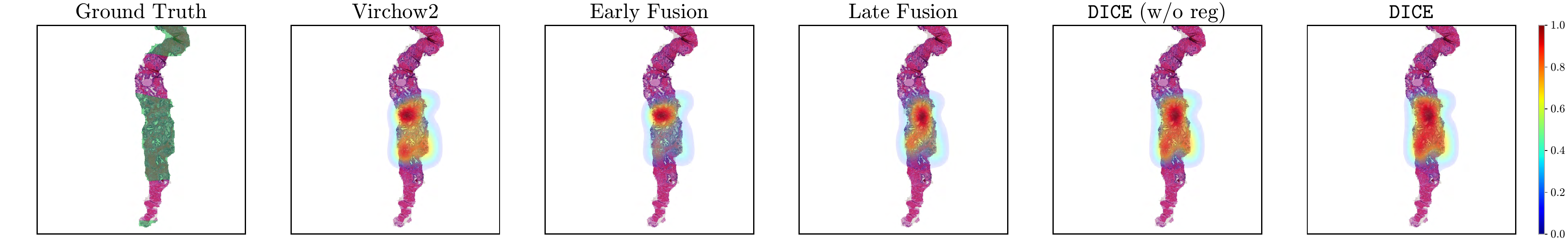}
        \caption{PANDA}
    \end{subfigure}\\[1em]
    \begin{subfigure}[b]{\linewidth}
        \includegraphics[width=\linewidth]{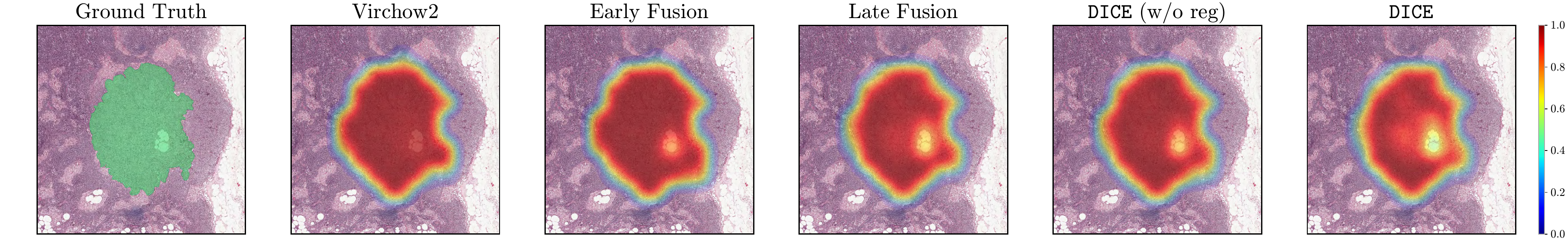}\\[0.3em]
        \includegraphics[width=\linewidth]{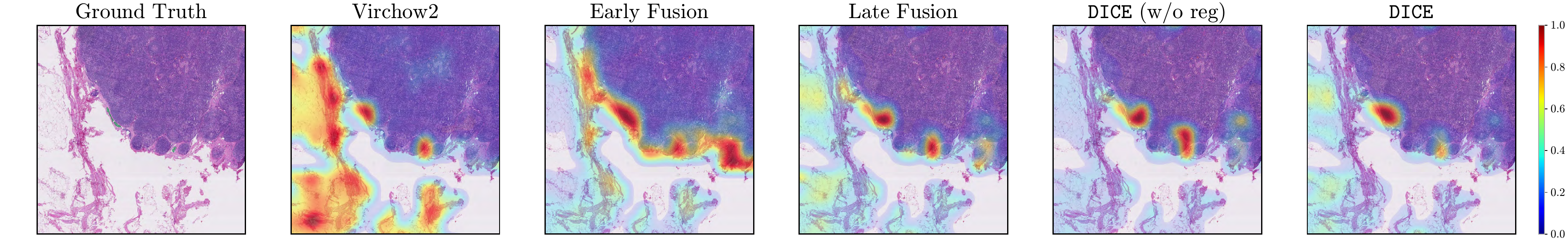}\\[0.3em]
        \includegraphics[width=\linewidth]{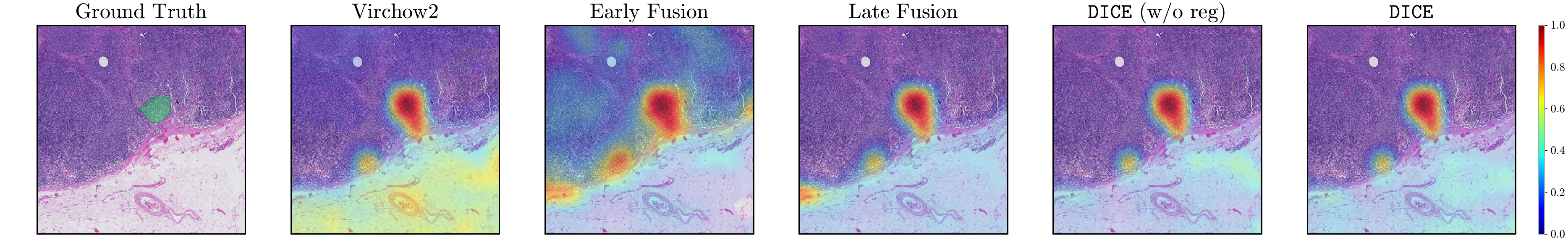}
        \caption{CAMELYON16}
    \end{subfigure}\\[1em]
    \begin{subfigure}[b]{\linewidth}
        \includegraphics[width=\linewidth]{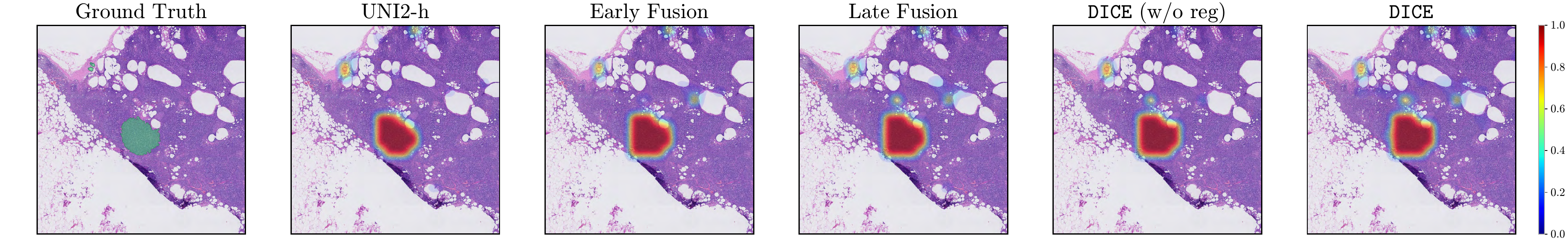}\\[0.3em]
        \includegraphics[width=\linewidth]{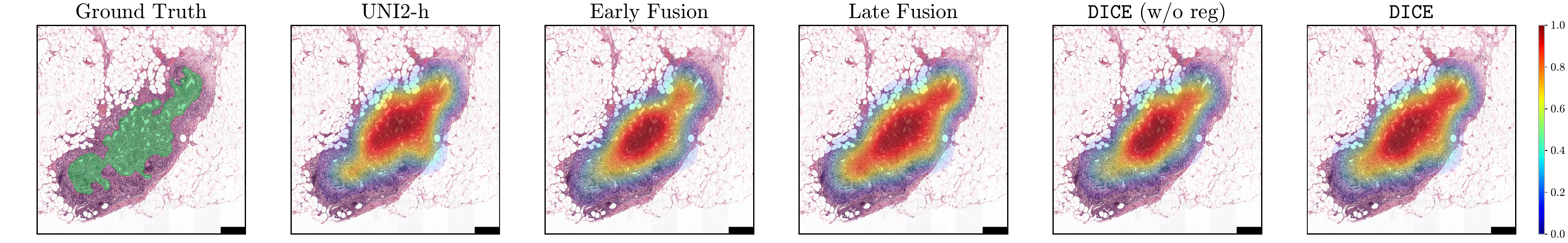}\\[0.3em]
        \includegraphics[width=\linewidth]{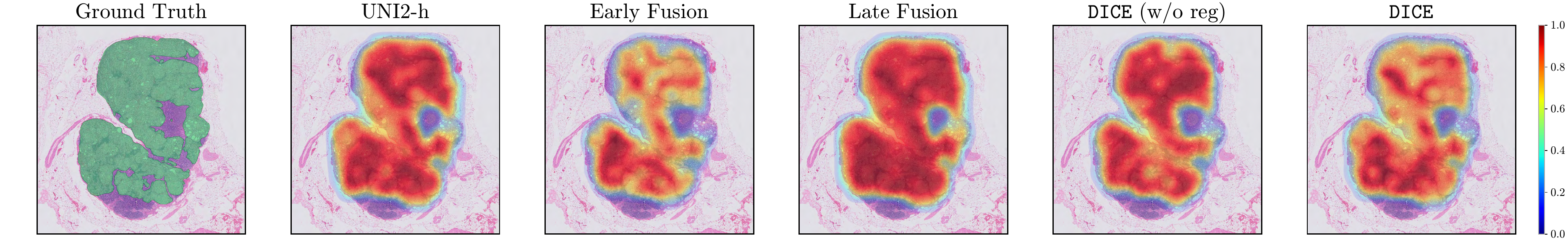}
        \caption{CAMELYON17}
    \end{subfigure}
    \caption{Additional visualizations across all datasets. Analogue of Figure~\ref{fig:attention_score_camelyon16}.}
    \label{fig:attention_appendix}
\end{figure}


\end{document}